\DeclareMathAlphabet\mathbfcal{OMS}{cmsy}{b}{n}
\newtheorem{theorem}{Theorem}
\newtheorem{example}{Example}
\newtheorem{proposition}{Proposition}
\newtheorem{lemma}{Lemma}
\newtheorem{corollary}{Corollary}
\newtheorem{claim}{Claim}
\newlength{\halftextwidth}
\title{Coalitional  Manipulation for Schulze's Rule}
  \author{
  \alignauthor
  Serge Gaspers\\
	\affaddr{UNSW and NICTA}\\
	\affaddr{Sydney, Australia}\\
	\email{sergeg@cse.unsw.edu.au}
  \alignauthor
  Thomas Kalinowski\\
	\affaddr{University of Rostock}\\
	\affaddr{Rostock, Germany}\\
	\email{thomas.kalinowski@uni-rostock.de}
  \alignauthor
  Nina Narodytska\\
	\affaddr{NICTA and UNSW}\\
	\affaddr{Sydney, Australia}\\
	\email{nina.narodytska@nicta.com.au}
  \and
  \alignauthor
  Toby Walsh\\
	\affaddr{NICTA and UNSW}\\
	\affaddr{Sydney, Australia}\\
	\email{toby.walsh@nicta.com.au}
}
\begin{document}
\maketitle

\begin{abstract}
Schulze's rule is used in
the elections of a large number of organizations
including Wikimedia and Debian. Part of the
reason for its popularity is
the large number of axiomatic properties,
like monotonicity and Condorcet consistency, which
it satisfies. We identify a potential shortcoming of Schulze's
rule: it is computationally vulnerable to manipulation.
In particular, we prove that computing
an unweighted coalitional manipulation (UCM) is polynomial
for any number of manipulators. This result
holds for both the unique winner and the co-winner
versions of UCM. This resolves an open question in~\cite{rankedpairs2}.
We also prove that computing a weighted coalitional manipulation (WCM)
is polynomial for a bounded number of candidates.
Finally, we
discuss the relation between the \emph{unique} winner UCM problem and
the \emph{co-winner} UCM problem and
argue that they
have substantially different necessary and sufficient conditions
for the existence of a successful manipulation.

\end{abstract}

\category{I.2.11}{Distributed Artificial Intelligence}{ Multiagent Systems}
\category{F.2}{Theory of Computation}{ Analysis of Algorithms and Problem Complexity}

\terms{Economics, Theory}
\keywords{social choice, voting, manipulation}

\sloppy

\section{Introduction}\label{sec:intro}

One important issue
with voting is that agents may cast strategic votes instead of
revealing  their true preferences. Gibbard \cite{Gibbard73} and Sattertwhaite \cite{Satterthwaite75}
proved that most voting rules are
manipulable in this way.
Bartholdi, Tovey and Trick~\cite{bartholditoveytrick}
suggested  computational complexity may nevertheless
act as a barrier to manipulation.
Interestingly, it
is NP-hard to compute a manipulation
for many commonly used voting rules, including maximin, ranked pairs~\cite{rankedpairs1},
Borda~\cite{dknwaaai11,borda2}, 2nd order Copeland, STV~\cite{stvhard},
Nanson and Baldwin~\cite{nwxaaai11}.
A recent survey on computational complexity as a barrier against manipulation in elections can be found in~\cite{survey1}.
We study here the
computational complexity of manipulating Schulze's voting rule, which
is arguably the most widespread Condorcet voting method in
use today.

Schulze's rule was proposed by Markus Schulze in 1997, and was quickly
adopted by many organizations. It is, for example, used by
the Annodex Association, Blitzed, Cassandra,
Debian, the European Democratic Education Conference,
the Free Software Foundation, GNU Privacy Guard, the Haskell
Logo Competition, Knight Foundation,
MTV, Neo, Open Stack,
the Pirate Party, RLLMUK, Sugar Labs, TopCoder, Ubuntu
and the Wikimedia Foundation.
In addition to being Condorcet consistent, Schulze's rule
satisfies many other desirable axiomatic properties, including
Pareto optimality, monotonicity,
Condorcet loser criterion, independence to clones,
reversal symmetry and the majority criterion.
Schulze's rule is known by several other
names including the Beatpath Method and Path Voting.
The method can be seen as the inverse procedure  to
another Condorcet consistent voting method, ranked pairs.
The ranked pairs method starts with the
largest defeats and uses as much information about these
defeats as it can without creating ambiguity.
By comparison, Schulze's rule repeatedly removes
the weakest defeat until ambiguity is removed.

Schulze's rule has a number of interesting computational
properties. Whilst it is polynomial to compute the winner
of Schulze's rule,
it requires finding paths in a directed graph labeled with
the strength of defeats. Such paths can be found using
a variant of the cubic time Floyd-Warshall algorithm~\cite{AHUJA93}.
More recently, Parkes and Xia initiated the
study of the computational complexity of manipulating
this voting rule~\cite{rankedpairs2}.
They proved that in the unique winner UCM problem
it is polynomial for a single agent to compute a manipulating
vote if one exists. They also investigated the vulnerability of
Schulze's rule to various types of control.
However, they left the computational
complexity of UCM with more than one
manipulator as an open question.

In this paper, we continue this study
and show that UCM
remains polynomial for an arbitrary number of manipulators.
For users of Schulze's rule, this result has both positive
and negative consequences. On the negative side,
this means that the rule is computationally vulnerable
to being manipulated.
On the positive side, this means that
when eliciting votes, we can compute in
polynomial time when we have collected enough
votes to declare the winner.
Our results also highlight
the importance of distinguishing carefully between manipulation problems
where we are looking for a single winner
compared to co-winners.

%

\section{Definitions}\label{sec:definitions}
\textbf{Voting systems.}
Consider an election with a set of $m$ candidates $\mathcal C=\{c_1,\ldots,c_m\}$.
A \emph{vote} is specified by a total strict order on $\mathcal C$:
$c_{i_1}\succ c_{i_2}\succ\cdots\succ c_{i_m}$.
An $n$-agent profile $P$ on $\mathcal C$ consists of $n$ votes, $P=(\succ_1,\succ_2,\ldots,\succ_n)$.

\textbf{Schulze's voting rule.}
Given an $n$-agent profile $P$ on $\mathcal C$, Schulze's rule determines a set of winners $\mathcal W_P\subseteq\mathcal C$ as follows.
\begin{enumerate} \itemsep=0pt
\item For candidates $x\neq y$, let $N_P(x,y)$ denote the number of agents who prefer $x$ over $y$, i.e. the number of indices $i$ with $x\succ_i y$.
\item The \emph{weighted majority graph} (WMG) is a directed graph $G_P$ whose vertex set is $\mathcal C$, and with an arc of weight $w_P(x,y)=N_P(x,y)-N_P(y,x)$
for every pair $(x,y)$ of distinct candidates. We denote WMG associated with a profile $P$ by $(G_P,w_P)$.
\item The \emph{strength} of a directed path $\pi=(x_1,x_2,\ldots,x_k)$ in $G_P$ is defined to be the minimum weight over all its arcs, i.e. $w_P(\pi)=\min\limits_{1\leqslant i\leqslant k-1}w_P(x_i,x_{i+1})$.
\item For candidates $x$ and $y$, let $S_P(x,y)$ denote the maximum strength of a path from $x$ to $y$, i.e.
\begin{align*}
 S_P(x,y)=\max\{ & w_P(\pi)\ :\ \pi
 \text{ is a path from $x$ to $y$ in }G_P\}.
\end{align*}
A path from $x$ to $y$ is a \emph{critical path} if its strength is $S_P(x,y)$.
\item The winning set is defined as
\[\mathcal W_P=\left\{x\in\mathcal C\ :\ \forall y\in\mathcal C\setminus\{x\}\ \ S_P(x,y)\geqslant S_P(y,x)\right\}.\]
\end{enumerate}
If $S_P(x,y)>S_P(y,x)$ for two candidates $x,y$, then we say that $x$ dominates $y$. Thus, $\mathcal W_P$ is the set of non-dominated vertices.

The winning set is always non-empty~\cite{Schulze2011}. Note that all weights $w_P(x,y)$, $(x,y) \in G_P$ are  either odd or even, depending on the size of the profile $P$. Conversely, for any weighted digraph where all weights have the same parity, a corresponding profile can be constructed~\cite{Debord87PhD}.
In the literature, for example,
\cite{rankedpairs1} and~\cite{rankedpairs2} refer
to this as McGarvey's trick~\cite{McGarvey}. We use this
result here as we define the non-manipulators' profile
by their weighted majority graph instead of by
their votes.

 \begin{figure}[tb]
   \centering
   \includegraphics[width=0.65\linewidth]{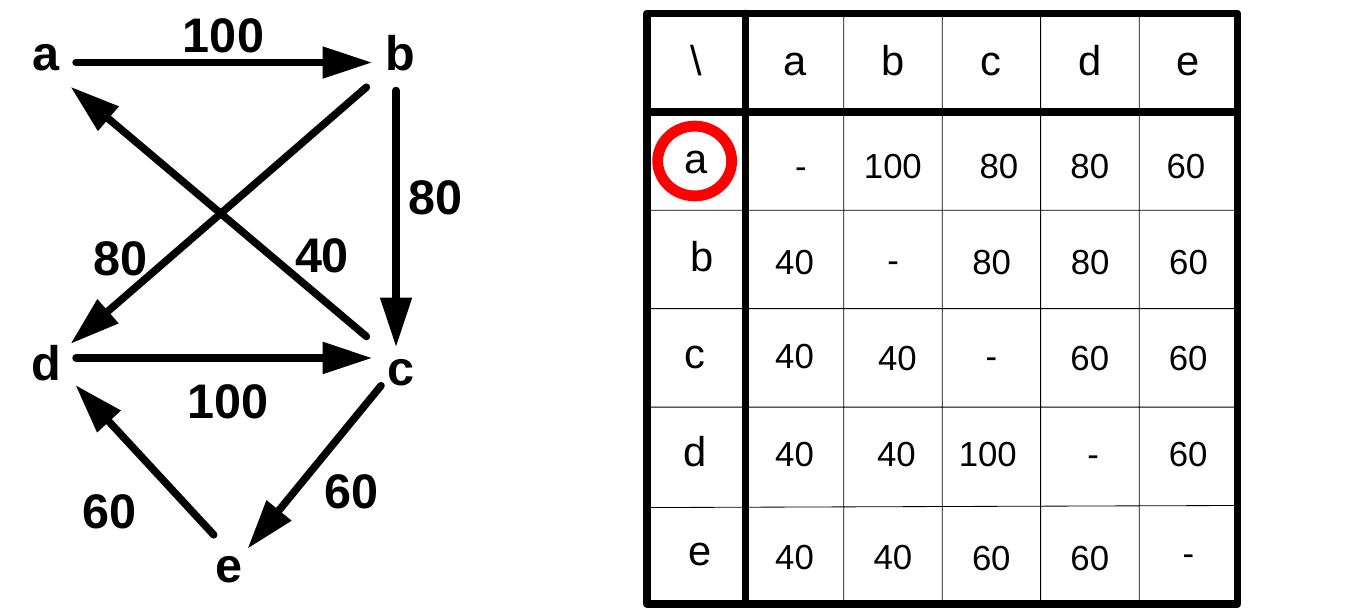}
   \caption{ The weighted majority graph $\mathbf{G_P}$ and the table of $\mathbf{S_P(x,y)}$, $\mathbf{x,y \in \{a,b,c,d,e\}}$ from Example~\ref{exm:intro}.}
   \label{fig:example1}
 \end{figure}

\begin{example}\label{exm:intro}
Consider an election with 5 alternatives $\{a,b,c,d,e\}$.  The weighted majority graph $G_P$ is shown in Figure~\ref{fig:example1}. We omit arcs with zero or negative weight for clarity. The table shows values $S_P(x,y)$, $x,y \in \{a,b,c,d,e\}$. As can be seen from the table, $S_P(a,x) > S_P(x,a)$, for all $x \in \{b,c,d,e\}$. Hence, the winning set contains a single alternative $\mathcal W_P= \{a\}$.\qed
\end{example}

\textbf{Strategic behavior.}
We distinguish between agents that vote truthfully and agents that vote strategically. We call the latter manipulators. We use the superscript $NM$ to denote  the non-manipulators' profile and the superscript $M$ to denote  the manipulators' profile.
The co-winner \emph{unweighted   coalitional manipulation} (UCM) problem is defined as follows. An instance is a tuple $(P^{NM},c,M)$, where $P^{NM}$ is the non-manipulators' profile, $c$ is the candidate preferred by the manipulators and $M$ is the set of manipulators. We are asked whether there exists a profile $P^{M}$ for the manipulators such that $c \in \mathcal W_{P^{NM}\cup P^{M}}$. The \emph{unique} winner UCM problem is a variant of the co-winner UCM where we are looking for
a manipulation such that $\{c\}= W_{P^{NM}\cup P^{M}}$.
 The {\em weighted  coalitional manipulation} (WCM) is defined similarly, where the weights of the agents (both non-manipulators and manipulators) are integers and are also given as inputs. 

\section{Weighted Coalitional Manipulation}\label{sec:wcm}
{We consider the co-winner WCM problem for Schulze's voting rule. We show that if there exists a successful manipulation $P^M$ then there exists a successful manipulation $P^{M'}$ where all manipulators vote identically. We prove this result in two steps. First, we construct a kind of directed spanning tree of the WMG $G_{P^{NM} \cup P^M}$  rooted at $c$, which gives us a critical path from $c$ to all other alternatives. Then, by traversing this tree, we build a new linear order of candidates that specifies a vote for all manipulators.}

 \begin{figure}[tb]
   \centering
   \includegraphics[width=0.8\linewidth]{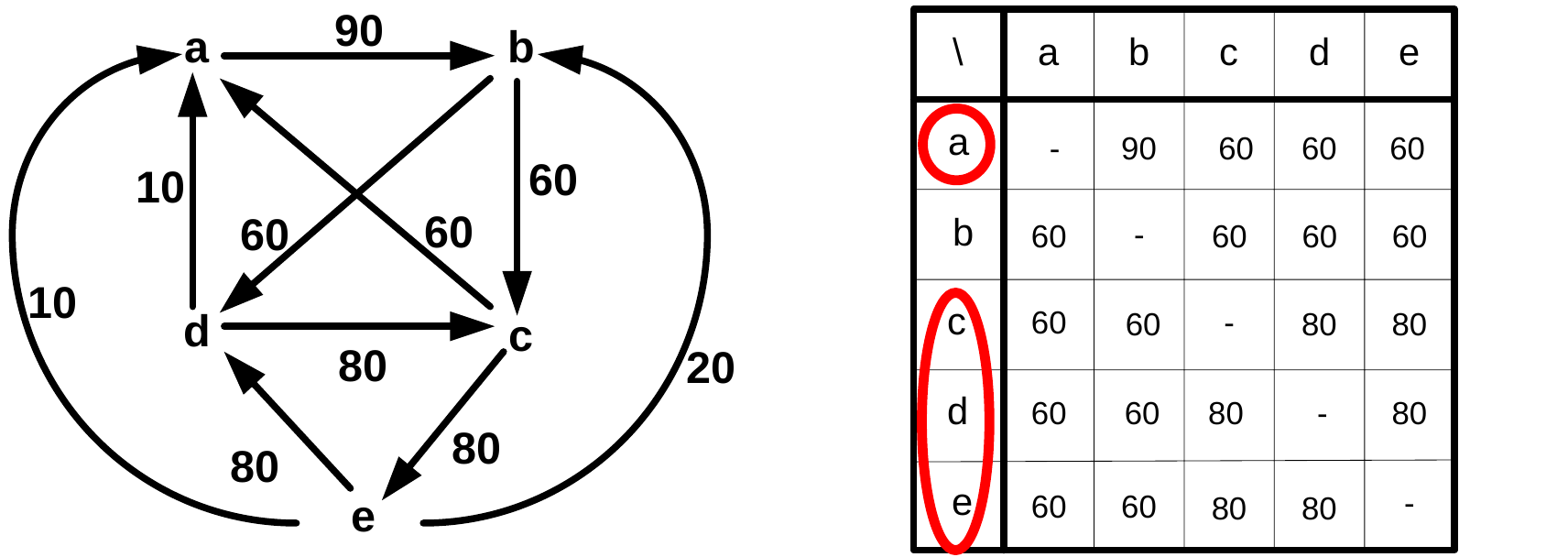}
   \caption{  The WMG and the table of $\mathbf{S_{P^{NM} \cup P^{M}}(x,y)}$, $\mathbf{x,y \in \{a,b,c,d,e\}}$ from Example~\ref{exm:intro1}. }
   \label{fig:example2_1}
 \end{figure}

\begin{example}\label{exm:intro1}
{Consider the WMG $G_{P}$ from Example~\ref{exm:intro}. Suppose that $P$ corresponds to the non-manipulators' profile, so that $P^{NM} = P$. Suppose we have 4 manipulators with weights $10,3,2$ and $5$ that vote in the following way: the first three manipulators vote $c \succ e \succ d \succ b \succ a$ and the last manipulator votes $c \succ a \succ e \succ d \succ b$. Hence, the total weight of the vote $c \succ e \succ d \succ b \succ a$ in $P^M$ is $15$ and  the total weight of the vote $c \succ a \succ e \succ d \succ b$  in $P^M$ is $5$. The updated WMG $G_{P^{NM} \cup P^{M}}$ and the corresponding table that shows the values of pairwise maximum strengths are shown in Figure~\ref{fig:example2_1}. Note that the alternative $c$ is non-dominated as well as alternatives $\{a,d,e\}$. Hence, the winning set $\mathcal W_{P^{NM} \cup P^{M}}= \{a,c,d,e\}$.}\qed
\end{example}

We show that given any profile $P$,
a winning candidate $c\in \mathcal W_P$ and a subset $P_0$ of the set of votes, e.g. $P_0 = P^M$, we can modify the votes in $P_0$ to be all the same, and $c$ is still in the winning set of the resulting profile $P'$. To do this, we construct a vote
$\Lambda=(c\succ c_1\succ\cdots\succ c_{m-1})$
such that $c$ is still a winner if we replace every vote in $P_0$ by $\Lambda$.
{Hence, in the context of the manipulation problem  we can think of  $P $ as $ P^{NM} \cup P^{M}$ and $P_0$ as $P^{M}$.}

An \emph{out-branching} $T$ of a directed graph $G$ rooted at a vertex $r$ is a connected spanning subdigraph of $G$ in which $r$ has in-degree $0$ and all other vertices have in-degree $1$.

\begin{lemma}\label{lem:spanning_tree}
Let $G=G_P$ be the digraph associated with the given profile $P$. There exists an out-branching $T$ rooted at $c$ in $G$ such that for every candidate $c'\neq c$ the unique path from $c$ to $c'$ in $T$ is a critical $c$-$c'$-path in $G$.
\end{lemma}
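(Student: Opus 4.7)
The plan is to construct $T$ greedily in a Prim-style fashion, which is the widest-path analogue of Dijkstra's algorithm (with $\min$ replacing $+$ and $\max$ replacing $\min$ in the key update). I start with $S=\{c\}$ and the trivial tree $T=(\{c\},\emptyset)$, and maintain a label $s(u)$ equal to the strength of the unique $c$-$u$ path in the current $T$ for each $u\in S$. At each step I choose a pair $(u,v)$ with $u\in S$ and $v\in\mathcal C\setminus S$ that maximizes $\min\{s(u),w_P(u,v)\}$, add $v$ to $S$, attach $v$ to $T$ via the arc $(u,v)$, and set $s(v):=\min\{s(u),w_P(u,v)\}$. Because $G_P$ is complete (there is an arc between every pair of distinct candidates), such a pair exists at every step, so after $m-1$ iterations $T$ is an out-branching of $G_P$ rooted at $c$.

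The key invariant I plan to establish by induction on $|S|$ is that $s(u)=S_P(c,u)$ for every $u\in S$, which is exactly the criticality property claimed in the lemma. The upper bound $s(u)\leqslant S_P(c,u)$ is immediate, because the $c$-$u$ path built in $T$ is an honest path in $G_P$. For the lower bound, when a new vertex $v^*$ is added via the greedily chosen arc $(u^*,v^*)$, I take any critical $c$-$v^*$ path $\pi=(c=x_0,x_1,\ldots,x_k=v^*)$ of strength $S_P(c,v^*)$ in $G_P$ and let $j$ be the smallest index with $x_j\notin S$, so that $x_{j-1}\in S$. The prefix $(x_0,\ldots,x_{j-1})$ is a $c$-$x_{j-1}$ path of strength at least $S_P(c,v^*)$, hence $S_P(c,x_{j-1})\geqslant S_P(c,v^*)$, and by induction $s(x_{j-1})=S_P(c,x_{j-1})$. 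The arc $(x_{j-1},x_j)$ lies on $\pi$, so $w_P(x_{j-1},x_j)\geqslant S_P(c,v^*)$. Thus the pair $(x_{j-1},x_j)$ witnesses $\min\{s(x_{j-1}),w_P(x_{j-1},x_j)\}\geqslant S_P(c,v^*)$, and the maximality of the greedy choice forces $s(v^*)=\min\{s(u^*),w_P(u^*,v^*)\}\geqslant S_P(c,v^*)$, closing the induction.

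The main subtlety lies in this lower-bound step: we must argue that an \emph{unknown} critical path in $G_P$ always exposes an eligible frontier arc of sufficient weight for the greedy rule to match. Completeness of the WMG and the Prim-style maximization together make this automatic, and no assumption on whether $c$ is a winner enters the argument. Ties in $s$ or in arc weights cause no trouble either, since all candidate arcs are compared under the same $\min$ quantity.
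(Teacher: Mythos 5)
Your proof is correct, but it is not the paper's argument: you run the widest-path (bottleneck) variant of Dijkstra, selecting the frontier arc $(u,v)$ maximizing $\min\{s(u),w_P(u,v)\}$ and maintaining the invariant $s(u)=S_P(c,u)$ for every settled vertex, whereas the paper's Algorithm~1 is a Prim-style rule that simply picks the arc of maximum weight $w(a,b)$ crossing the cut $(F_i,X_i)$, with no labels at all. The two rules can select different parents for the same vertex, so they genuinely are different constructions, and the correctness arguments differ accordingly: your exchange argument is local and immediate --- any critical $c$-$v^*$-path exposes a crossing pair whose $\min$-value is at least $S_P(c,v^*)$, so greedy maximality closes the induction in one step --- while the paper must work a posteriori, locating the \emph{first} arc realizing the strength of the tree path to $b$, going back to the step $q$ at which that arc was inserted, and arguing that any rival path crosses the cut $(F_q,X_q)$ through an arc of no larger weight. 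Your version buys a cleaner, textbook invariant (and is arguably easier to verify); the paper's buys a simpler selection rule that needs no bookkeeping of path strengths during construction. Either tree satisfies the stated property and is therefore interchangeable in Lemma~\ref{lem:ordering} and Theorem~\ref{thm:homogeneity}, which use $T$ only as an out-branching of critical paths. One small point to make explicit: you should set $s(c)=+\infty$ (the strength of the empty path, i.e.\ a minimum over an empty set of arcs) so that the base case and the situation $x_{j-1}=c$ in your lower-bound step are covered; with that convention, and with existence of a crossing arc guaranteed by completeness of the weighted majority graph, your argument is airtight.
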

\begin{proof}
We construct an out-branching $T$ of $G$ by Algorithm \ref{alg:spanning_tree}.
At the initial step the algorithm makes
$c$ the root of $T$. At each step, we add a new vertex $x$, $x \in V(G) \setminus V(T)$,
to the tree $T$ iff the arc $(x,y), y \in V(T)$ has maximum value
$w(x,y)$ among all arcs $(x',y')$, $x \in V(G) \setminus V(T), y \in V(T)$.
  \begin{algorithm}
  \vspace*{-0.4cm}
\caption{Out-branching construction}\label{alg:spanning_tree}
    \begin{tabbing}
          .....\=.....\=.....\=.....\=.....\=.............................. \kill \\
\textbf{Input:}\>\>\> a weighted digraph $(G,w)=(G_P,w_P)$ and\\
\>\>\> a distinguished candidate $c$.\\[1ex]
Initialize $F_1 = \{c\}$, $X_1 = \mathcal C \setminus \{c\}$ and $T_1 = \{\}$.\\
\textbf{for} $i=1,\ldots,m-1$ \textbf{do}\\
\> $D=\max\{w(x,y)\ :\ x\in F_i,\ y\in X_i\}$\\
\> Choose  $a\in F_i$ and $b\in X_i$ with $w(a,b)=D$\\
\> $F_{i+1}=F_i \cup \{b\}$\\
\> $X_{i+1} = X_i \setminus \{b\} $\\
\> $T_{i+1} = T_i \cup \{(a,b)\}$\\
return $T=T_{m}$
    \end{tabbing}
    \vspace*{-0.4cm}
  \end{algorithm}

Clearly, Algorithm \ref{alg:spanning_tree} returns an out-branching because the input digraph is complete. So we just have to show that it satisfies the required property. We do this by induction on the size of $T$. For $i=1$ the claim is obvious, so assume $1<i<m$, and let $b$ be the vertex added in step $i$, i.e. $\{b\}=F_{i+1}\setminus F_i$. Let $\pi=(c=a_0,a_1,\ldots,a_{k-1},a_k=b)$ be the $c$-$b$-path in $T$, and let $j$ be the index of the first arc on that path realizing its strength, i.e.
$j=\min\{t\ :\ w(a_t,a_{t+1})=w(\pi)\}$.
Let $q$ be the step in which the arc $(a_t,a_{t+1})$ is added to $T$. Now suppose that there is a $c$-$b$-path $\pi'=(c,f_1,\ldots,f_r,\ldots,b)$ in $G$ with $w(\pi')>w(\pi)$. Because $c\in\pi'$ and $\pi'\not\subseteq T$, there exists some arc $(f_r,f_{r+1})\in\pi'$ with $f_r\in F_q$ and  $f_{r+1}\in X_q$. Then,
$w(\pi)=w(a_t,a_{t+1})\geqslant w(f_r,f_{r+1})\geqslant w(\pi')$,
contradicting the assumption and thus concluding the proof.
\end{proof}
  \begin{figure}[tb]
   \centering
   \includegraphics[width=0.8\linewidth]{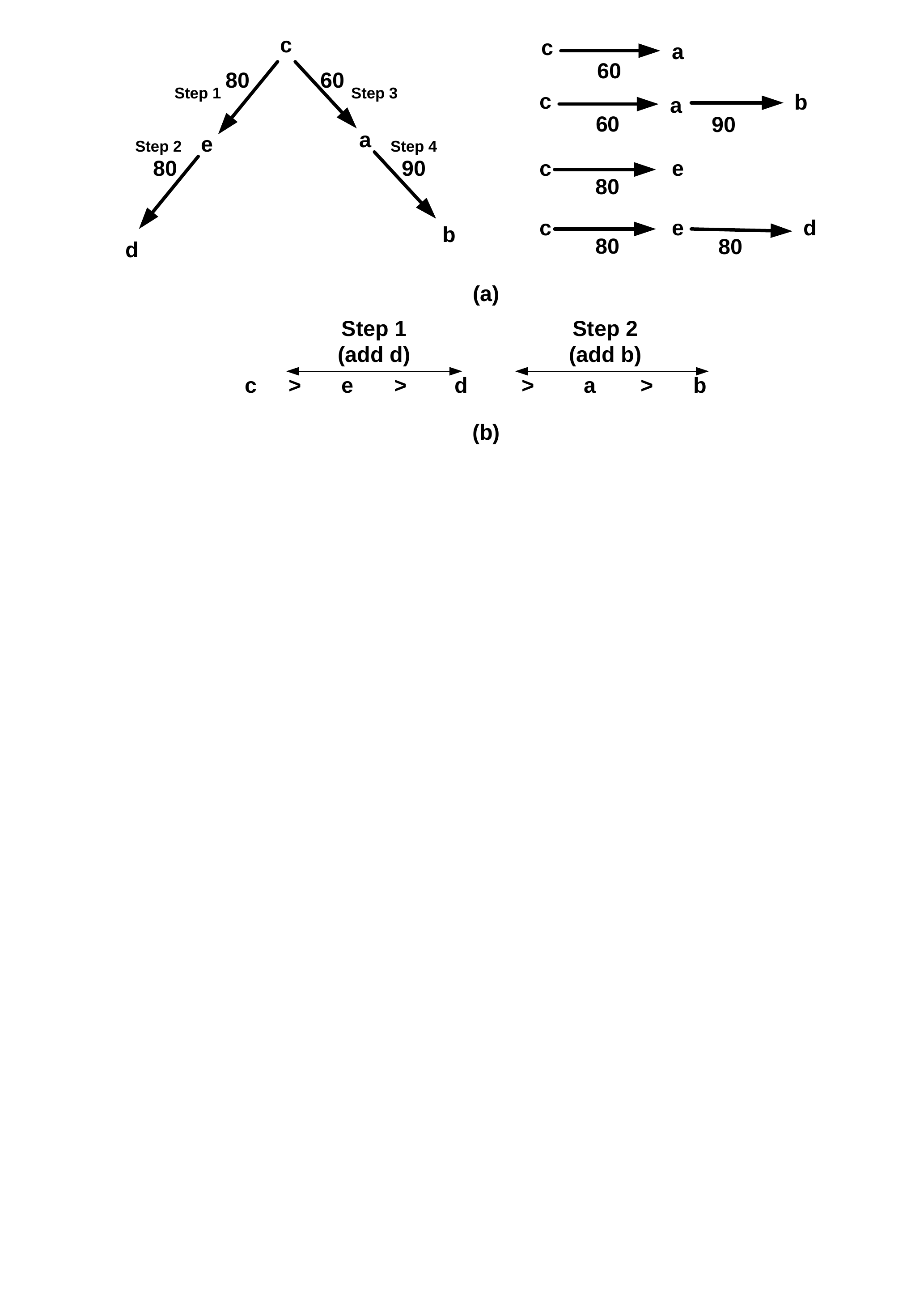}
   \caption{ (a) The out-branching rooted at $\mathbf{c}$ that is produced by Algorithm~\ref{alg:spanning_tree}
   and the corresponding critical $\mathbf{c}$-$\mathbf{x}$-paths, $\mathbf{x \in \{a,b,d,e\}}$;
   (b) The $\mathbf{\Lambda}$ ordering constructed by Algorithm~\ref{alg:ordering}. }
   \label{fig:example2_2}
 \end{figure}
\begin{example}\label{exm:intro_spanning}{
Figure~\ref{fig:example2_2}(a) shows the out-branching for $G_{P^{NM} \cup P^{M}}$ and critical $c$-$x$-paths, $x \in \{a,b,d,e\}$, of the WMG from Example~\ref{exm:intro1}. Consider, for example, the path $(c,e,d)$ in the out-branching. This path has strength $80$ and it corresponds to the maximum strength $c$-$d$-path in $G_{P^{NM} \cup P^{M}}$.}\qed
\end{example}

\begin{lemma}\label{lem:ordering}
Let $G=G_P$ be the graph associated with the given profile $P$ and let $T$ be an out-branching rooted at $c$ as in Lemma \ref{lem:spanning_tree}. Then there exists an ordering
$\Lambda=(c\succ c_1\succ\cdots\succ c_{m-1})$
on the set of candidates with the following properties.
\begin{itemize}
\item Property 1: For each $c_i$ the unique $c$-$c_i$-path in $T$ respects the ordering $\Lambda$, i.e. it is of the form
$(c,c_{j_1},\ldots,c_{j_k}=c_i) \text{ with } 1\leqslant j_1< j_2<\cdots<j_k$.
\item Property 2: The strength of a critical path from $c_i$, $i \in [1,m)$ to $c$ is  nonincreasing along the ordering $\Lambda$:
\[S_P(c_i,c)\geqslant S_P(c_j,c)\qquad\text{for }1\leqslant i<j\leqslant m-1.\]
\end{itemize}
\end{lemma}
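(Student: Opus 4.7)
The plan is to build $\Lambda$ by a greedy topological traversal of $T$: starting from $c$, at each step append the vertex $c_i$ whose parent in $T$ has already been placed and which maximizes $S_P(\cdot,c)$ among all such eligible vertices. Property~1 is then immediate, because every vertex is inserted strictly after its parent, so the unique $c$-to-$c_i$ path in $T$ meets $\Lambda$ in the required order of strictly increasing indices. The real content of the lemma is Property~2, and I expect the main obstacle to be a single auxiliary claim that links the tree structure to reverse-direction strengths.

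The auxiliary claim is that for every arc $(u,v)$ of $T$ with $u$ the parent of $v$, one has $S_P(u,c)\geq S_P(v,c)$. This is where the standing assumption $c\in\mathcal W_P$ (from the preceding discussion of the manipulation setting) is essential: without it small examples show the claim can fail. To prove it, set $s=S_P(v,c)$ and let $\pi$ be a critical $v$-to-$c$ path. By Lemma~\ref{lem:spanning_tree} the tree path from $c$ to $v$ is a critical $c$-to-$v$ path, so in particular $w_P(u,v)\geq S_P(c,v)$. Since $c$ is a winner, $S_P(c,v)\geq S_P(v,c)=s$, hence $w_P(u,v)\geq s$. Prepending the arc $(u,v)$ to $\pi$ then produces a $u$-to-$c$ path of strength $\min\{w_P(u,v),s\}=s$, so $S_P(u,c)\geq s$.

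With the claim in hand, Property~2 follows by induction on the construction. Let $E_i$ denote the eligible set at step $i$, and suppose $c_i\in E_i$ is selected as the vertex of maximum $S_P(\cdot,c)$. Then $E_{i+1}=(E_i\setminus\{c_i\})\cup C$, where $C$ is the set of $T$-children of $c_i$. Every element of $E_i\setminus\{c_i\}$ has $S_P(\cdot,c)\leq S_P(c_i,c)$ by the maximality of $c_i$, and every vertex in $C$ has $S_P(\cdot,c)\leq S_P(c_i,c)$ by the auxiliary claim applied to the arcs from $c_i$ to its children. Consequently $S_P(c_{i+1},c)\leq S_P(c_i,c)$, and iterating this for $1\leq i<j\leq m-1$ yields the required inequality $S_P(c_i,c)\geq S_P(c_j,c)$.
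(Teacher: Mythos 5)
Your proof is correct, and it rests on the same two ingredients as the paper's: the critical-path property of $T$ from Lemma~\ref{lem:spanning_tree} (every arc $(u,v)$ of $T$ has weight at least $S_P(c,v)$, the strength of the critical $c$-$v$-path it terminates) and the winner condition $S_P(c,v)\geqslant S_P(v,c)$, combined by concatenating with a critical reverse path. The organization differs, though. The paper's Algorithm~\ref{alg:ordering} repeatedly picks the vertex $a$ maximizing $S_P(\cdot,c)$ over \emph{all} remaining vertices, takes the unique $c$-$a$-path in $T$, and appends the entire unplaced suffix of that path at once; it then has to argue separately that the intermediate vertices $g_t$ of that suffix satisfy $S_P(g_t,c)\geqslant S_P(a,c)$ (which, by maximality of $a$, forces equality). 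You instead isolate the arc-level monotonicity $S_P(u,c)\geqslant S_P(v,c)$ for every $T$-arc $(u,v)$ and run a one-vertex-at-a-time greedy over the frontier of vertices whose $T$-parent is already placed. Your auxiliary claim is slightly sharper --- iterating it along a tree path recovers the paper's suffix inequality --- and it turns Property~2 into a clean induction on the eligible set; the only point you leave implicit is that the eligible set stays nonempty until every vertex is placed, which is immediate because $T$ is a spanning out-branching rooted at $c$. You are also right that $c\in\mathcal W_P$ is essential and is a standing hypothesis of the surrounding setup rather than of the lemma statement; the paper's proof invokes it at the same spot. The two constructions may output different orderings, but the lemma only asserts existence and Theorem~\ref{thm:homogeneity} uses nothing beyond Properties~1 and~2, so either serves.
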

The intuition for Property 1 is that the strength of each critical path from $c$ to $c_i$, $i \in [1,m)$ does not decrease if we change all votes in $P_0$ to $\Lambda$.
\begin{proof}
Algorithm \ref{alg:ordering} returns a total order on the set of candidates.
The algorithm traverses the out-branching $T$  obtained by Algorithm~\ref{alg:spanning_tree}.
At each step, we identify a vertex $x$ with the largest value of the strength $S_P(x,c)$.
Then we find the path $\pi$ from $c$ to $x$ in $T$ which is a critical path by Lemma~\ref{lem:spanning_tree}.
A prefix of the path $\pi$ might be added to $\Lambda$ at this point. Hence, we only focus
on the suffix of $\pi$ that does not contain vertices added to $\Lambda$.
Then we add the vertices in this suffix of $\pi$ to $\Lambda$ in the order in which
they appear in $\pi$. We terminate when $\Lambda$ is a total order over all alternatives.

  \begin{algorithm}
\caption{Construction of the ordering $\Lambda$}\label{alg:ordering}
\vspace*{-0.4cm}
    \begin{tabbing}
          .....\=.....\=.....\=.....\=.....\=.............................. \kill \\
\textbf{Input:} \>\>\> a weighted digraph $(G,w)=(G_P,w_P)$,\\
\>\>\> a distinguished candidate $c$ and\\
\>\>\> the out-branching $T$ with root $c$ from Algorithm \ref{alg:spanning_tree}.\\[1ex]
Initialize $\Lambda=(c)$, $X=\mathcal C\setminus\{c\}$\\
\textbf{while} $X\neq\varnothing$ \textbf{do} \\
\> $D=\max\{S_P(x,c)\ :\ x\in X\}$\\
\> Let $a\in X$ be any vertex with $S_P(a,c)=D$.\\
\> Let $\pi$ be the unique $c$-$a$-path in $T$.\\
\> Add the vertices in $\pi\cap X$ to $\Lambda$ in the order in which \\
\>they appear on $\pi$.\\
\> Update $X:=X\setminus\pi$.\\
return $\Lambda$
     \end{tabbing}
\vspace*{-0.4cm}
\end{algorithm}
We show that it satisfies the two properties by induction on the length of $\Lambda$. For the initial $\Lambda=(c)$ it is obviously true. So suppose we are in the while loop adding $\pi\cap X$ for a $c$-$a$-path
$\pi=(c=g_0,g_1,\ldots,g_k,a)$.
Note that $\pi\cap X$ is a suffix of $\pi$, i.e. $\pi\cap X=\{g_{j+1},\ldots,g_k,a\}$ for some $j$. To see this, let $g_j$ be the last vertex on $\pi$ that is already in $\Lambda$. Then by construction, all the vertices $g_1,g_2,\ldots,g_j$ have been added to $\Lambda$ in the step in which $g_j$ was added or earlier.

By the induction hypothesis the $c$-$g_j$-path in $T$ respects $\Lambda$, and because the suffix $g_{j+1},\ldots,g_k,a$ is added to $\Lambda$ and  $g_{j+1},\ldots,g_k,a$ is a sub-path of $\pi$, the condition of Property 1 is satisfied for all these vertices.

Next we observe that $S_P(g_t,c)\geqslant S_P(a,c)$ for $t=j+1,\ldots,k$. To see this, let $\pi'$ be an $a$-$c$-path of strength $S_P(a,c)$. We have $w(g_t,g_{t+1})\geqslant S_P(c,a)\geqslant S_P(a,c)$
 for all $t$, where the first inequality is true because $(g_t,g_{t+1})$ is an arc on the $c$-$a$-path in $T$ which is a critical path, and the second inequality because $c$ is a winner. Thus the concatenation of $g_t,g_{t+1},\ldots,g_k,a$ and $\pi'$ provides a $g_t$-$c$-path of strength $S_P(a,c)$. Now Property 2 follows from the observation that $S_P(x,c)\leqslant S_P(a,c)$ for all $x\in X\setminus\pi$ which follows from the maximality condition in the step where $a$ is chosen.
\end{proof}

  \begin{figure}[tb]
   \centering
   \includegraphics[width=0.8\linewidth]{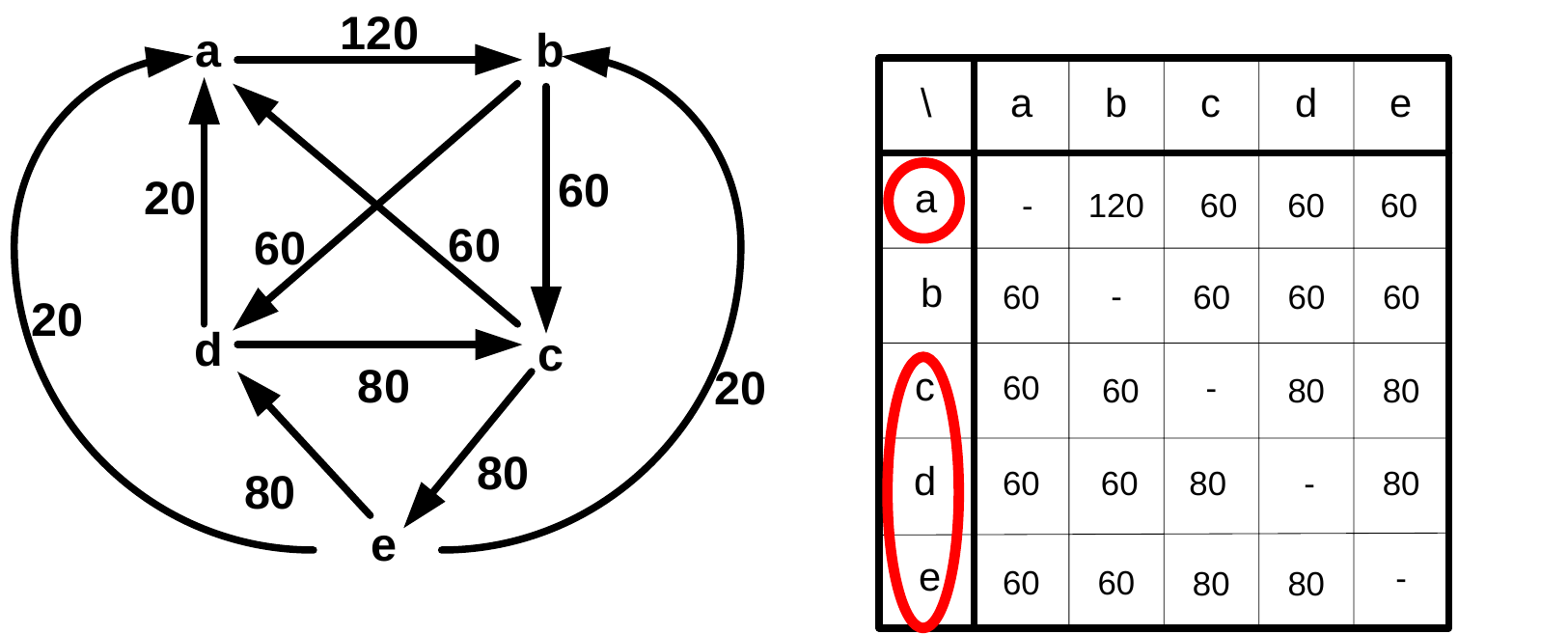}
   \caption{ An updated WMG and the table of $\mathbf{S_{P^{NM} \cup P^{M}}(x,y)}$, $\mathbf{x,y \in \{a,b,c,d,e\}}$ from Example~\ref{exm:intro_new_wmg} using the $\mathbf{\Lambda}$ ordering constructed by Algorithm~\ref{alg:ordering}. }
   \label{fig:example2_3}
 \end{figure}

\begin{example}\label{exm:intro_new_wmg}
We construct an ordering $\Lambda$ based on the out-branching obtained in Example~\ref{exm:intro_spanning}. The alternatives $\{d,e\}$ are such that $ S_{P^{NM} \cup P^M}(e,c) = S_{P^{NM} \cup P^M}(d,c) = \max\{S_{P^{NM} \cup P^M}(x,c)\ :\ x\in \{a,b,d\}\}$. We break the tie between $e$ and $d$ arbitrarily and select $d$. Hence, we build a partial order $c \succ e \succ d$. The next alternatives that we consider are $\{b,a\}$ as $ S_{P^{NM} \cup P^M}(b,c) = S_{P^{NM} \cup P^M}(b,a) = \max\{S_{P^{NM} \cup P^M}(x,c)\ :\ x\in \{a,b\}\}$. We select $b$ and add the suffix $a \succ b$ to the partial order $c \succ e \succ d$, so that we get  $\Lambda = (c \succ e \succ d \succ a \succ b)$. Hence, $4$ manipulators can vote with respect to $\Lambda$. Figure~\ref{fig:example2_2}(b) shows the execution of Algorithm~\ref{alg:ordering}.
Figure~\ref{fig:example2_3} shows the new WMG and the corresponding table of maximum strengths. It is easy to see that $c$ is still a winner after the manipulators change their votes.\qed
\end{example}

For our given profile $P$ and distinguished candidate $c$, we construct an ordering $\Lambda$ as described in the proof of Lemma \ref{lem:ordering}.
\begin{theorem}\label{thm:homogeneity}
Let $P$ be any profile with candidate $c$ in the winning set, let $P_0\subseteq P$ be any subprofile, and set $P_1=P\setminus P_0$. Let $P'$ be the profile given by $P'=P_1\cup \bigcup_{i=1}^{\lvert P_0\rvert}\{\Lambda\}$,
where $\Lambda$ is the ordering constructed in Lemma \ref{lem:ordering}. Then $c$ is still in the winning set $\mathcal W_{P'}$.
\end{theorem}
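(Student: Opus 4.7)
The plan is to show $S_{P'}(c,c_i)\geqslant S_{P'}(c_i,c)$ for every $c_i\neq c$, where I write $\Lambda=(c\succ c_1\succ\cdots\succ c_{m-1})$; this places $c$ in $\mathcal{W}_{P'}$. I will lower-bound $S_{P'}(c,c_i)$ via the out-branching $T$ together with Property~1 of Lemma~\ref{lem:ordering}, then upper-bound $S_{P'}(c_i,c)$ by $S_P(c_i,c)$ by induction on $i$ using Property~2, and chain through $c\in\mathcal{W}_P$ to obtain $S_{P'}(c,c_i)\geqslant S_P(c,c_i)\geqslant S_P(c_i,c)\geqslant S_{P'}(c_i,c)$.

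For the lower bound, note that any arc $(u,v)$ with $u\succ_\Lambda v$ satisfies $w_{P'}(u,v)=w_{P_1}(u,v)+\lvert P_0\rvert\geqslant w_P(u,v)$, while any arc against $\Lambda$ satisfies $w_{P'}(u,v)\leqslant w_P(u,v)$. By Property~1 the unique $c$-$c_i$-path in $T$ uses only arcs ascending in $\Lambda$, so none of its arc-weights drops when $P_0$ is replaced by copies of $\Lambda$. Since that path is critical in $G_P$ by Lemma~\ref{lem:spanning_tree}, $S_{P'}(c,c_i)\geqslant S_P(c,c_i)$.

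For the upper bound I induct on $i\geqslant 1$ to show $S_{P'}(c_i,c)\leqslant S_P(c_i,c)$. Fix a $c_i$-$c$-path $\pi'=(c_i=u_0,u_1,\ldots,u_l=c)$ in $G_{P'}$. If every internal vertex $u_1,\ldots,u_{l-1}$ sits at position $\geqslant i$ in $\Lambda$, write $u_{l-1}=c_s$ with $s\geqslant i$; the descending last arc yields $w_{P'}(\pi')\leqslant w_{P'}(c_s,c)\leqslant w_P(c_s,c)\leqslant S_P(c_s,c)\leqslant S_P(c_i,c)$, with the final inequality from Property~2. Otherwise let $t$ be the first index with $u_t$ at position $<i$ in $\Lambda$ and set $c_a=u_{t-1}$, $c_b=u_t$ with $a\geqslant i>b\geqslant 1$. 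Concatenating the descending arc $(c_a,c_b)$ with a critical $c_b$-$c$-path in $G_P$ gives $S_P(c_a,c)\geqslant\min\{w_P(c_a,c_b),S_P(c_b,c)\}$. If the minimum is $w_P(c_a,c_b)$, Property~2 yields $w_P(c_a,c_b)\leqslant S_P(c_a,c)\leqslant S_P(c_i,c)$, so $w_{P'}(\pi')\leqslant w_{P'}(c_a,c_b)\leqslant w_P(c_a,c_b)\leqslant S_P(c_i,c)$. Otherwise the minimum is $S_P(c_b,c)$ and $S_P(c_a,c)\geqslant S_P(c_b,c)$; combined with $S_P(c_b,c)\geqslant S_P(c_i,c)\geqslant S_P(c_a,c)$ from Property~2, the chain collapses to $S_P(c_b,c)=S_P(c_i,c)$. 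The suffix $(c_b,u_{t+1},\ldots,c)$ of $\pi'$ is a $c_b$-$c$-path in $G_{P'}$ whose strength is at least $w_{P'}(\pi')$, so the induction hypothesis (with $b<i$) supplies $w_{P'}(\pi')\leqslant S_{P'}(c_b,c)\leqslant S_P(c_b,c)=S_P(c_i,c)$.

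The hard part is the last sub-case: the crossing arc $(c_a,c_b)$ may remain strong enough in $G_{P'}$ that its weight alone cannot bound $w_{P'}(\pi')$ by $S_P(c_i,c)$. The resolution is that Property~2 forces $S_P(c_b,c)=S_P(c_i,c)$ exactly in this regime, so the induction hypothesis at the smaller index $b$ delivers the missing bound via the suffix of $\pi'$. The base case $i=1$ is covered by the first alternative, since every vertex other than $c$ sits at a position $\geqslant 1$ in $\Lambda$, and combining both bounds concludes $c\in\mathcal{W}_{P'}$.
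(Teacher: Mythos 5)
Your proof is correct and follows essentially the same route as the paper's: the lower bound $S_{P'}(c,c_i)\geqslant S_P(c,c_i)$ via the out-branching and Property~1 is exactly the paper's Claim~1, and your upper bound $S_{P'}(c_i,c)\leqslant S_P(c_i,c)$ is the paper's Claim~2, built from the same ingredients (the first arc of the path crossing the relevant threshold must go against $\Lambda$, Property~2 applied at that arc's endpoints, and a recursion on the remaining suffix of the path). The only difference is bookkeeping: you organize the upper bound as a direct induction on the position of $c_i$ in $\Lambda$, whereas the paper phrases the same argument as a minimal-counterexample contradiction with a chain of five inequalities.
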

\begin{proof}
Denote the WMGs associated with the two profiles by $(G,w)=(G_P,w_P)$ and $(G',w')=(G_{P'},w_{P'})$.
We recall that we use the out-branching $T$ with root $c$ obtained by Algorithm \ref{alg:spanning_tree}. The theorem is based on the following two claims.

\begin{claim}\label{cl:1}
 For each path $\pi$ in $T$ starting from $c$ the strength of $\pi$ does not decrease in the graph $G'$, i.e. $w'(\pi)\geqslant w(\pi)$.
\end{claim}
By construction of $\Lambda$, we have $w'(x,y)\geqslant w(x,y)$ for every arc $(x,y)\in T$, and this implies Claim~\ref{cl:1}.

\begin{claim}\label{cl:2}
 For every $a$-$c$-path $\pi$, the strength of $\pi$ in $G'$ does not exceed the strength of a critical $a$-$c$-path in $G$, i.e. $w'(\pi)\leqslant S_P(a,c)$.
\end{claim}
 To prove Claim~\ref{cl:2}, assume, for the sake of contradiction, that $a$ is a vertex such that there is an $a$-$c$-path $\pi=(a=a_1,\ldots,a_k=c)$ with $w'(\pi)> S_P(a,c)$, and w.l.o.g. we assume that for all $a_i$-$c$-paths $\sigma$, $1\le i\le k-1$, we have $w'(\sigma)\leqslant S_P(a_i,c)$. Because $c$ is a winner with respect to $P$, $\pi$ must contain an arc $(x,y)$ of weight $w(x,y)$ such that $w(x,y) \leqslant S_P(c,a)$. Let $(b,d)=(a_i,a_{i+1})$ be the first arc with this property, i.e. $i=\min\{j\ :\ w(a_j,a_{j+1})\leqslant S_P(a,c)\}$. Next we show the chain of inequalities
\[w'(\pi)\stackrel{(1)}{>}S_P(a,c)\stackrel{(2)}{\geqslant} S_P(b,c)\stackrel{(3)}{\geqslant} S_P(d,c)\stackrel{(4)}{\geqslant} S_{P'}(d,c)\stackrel{(5)}{\geqslant} w'(\pi),\]
 which is a contradiction and thus proves the claim.

 \begin{figure}[tb]
   \centering
   \includegraphics[width=.5\linewidth]{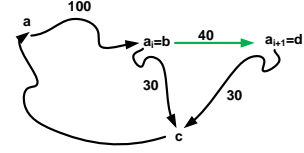}
   \caption{A diagram illustrating the arguments for steps (2),(3) and (4) of the inequality chain in the proof of Claim 2.}
   \label{fig:claim_2}
 \end{figure}

 The following arguments for the single inequalities above are illustrated in Figure \ref{fig:claim_2}.
\begin{description}
\itemsep=0pt
\item[(1)] By assumption.
\item[(2)] As $c$ is a winner for $P$, every $a$-$c$-path must contain an arc $(x,y)$ with $w(x,y)\leqslant S_P(c,a)$. By the choice of $b$, we know that $(b,d)$ is the first arc such that $w(b,d)\leqslant S_P(a,c)$. Hence, the strength
    of the $a$-$b$-path  is greater than the strength of the $a$-$c$-path, $S_P(a,b)>S_P(a,c)$. Now from
$S_P(a,c)\geqslant\min\{S_P(a,b),S_P(b,c)\}$
it follows that $S_P(b,c)\leqslant S_P(a,c)$.
\item[(3)] From the assumption $w'(\pi)> S_P(a,c)$ it follows that $w'(b,d)>w(b,d)$ which implies that $b$ comes before $d$ in the ordering $\Lambda$, and then the inequality (3) follows from Lemma \ref{lem:ordering}.
\item[(4)] By assumption, $w'(\sigma)\leqslant S_P(d,c)$ for all $d$-$c$-paths $\sigma$, hence $S_{P'}(d,c)\leqslant S_P(d,c)$.
\item[(5)] Let $\pi_1$ be the $d$-$c$-subpath of $\pi$. Then $S_{P'}(d,c)\geqslant w'(\pi_1)\geqslant w'(\pi)$.
\end{description}
Together, Claims~\ref{cl:1} and \ref{cl:2} prove the theorem.
\end{proof}

\begin{corollary}\label{col:wcm}
The co-winner WCM problem for Schulze's rule is polynomial if the number of candidates is bounded.
\end{corollary}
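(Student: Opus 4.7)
The plan is to derive the corollary as a direct consequence of Theorem~\ref{thm:homogeneity}: the homogenization result tells us that we only need to search over manipulator profiles in which \emph{every} manipulator casts the same vote, and when the number of candidates $m$ is bounded there are only constantly many such votes to try.

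More precisely, I would proceed as follows. Given a WCM instance $(P^{NM},c,M)$ with weights $w_1,\ldots,w_{\lvert M\rvert}$, observe that a weighted manipulator profile $P^M$ induces the same WMG as the (multiset) unweighted profile in which each manipulator's vote is repeated according to its weight; Theorem~\ref{thm:homogeneity} operates purely on WMGs, so it applies to this unfolded profile with $P_0$ equal to the collection of manipulator copies. Consequently, if any successful manipulation $P^M$ exists (so that $c\in\mathcal W_{P^{NM}\cup P^M}$), then by Theorem~\ref{thm:homogeneity} there is an ordering $\Lambda=(c\succ c_{i_1}\succ\cdots\succ c_{i_{m-1}})$ such that when every manipulator casts the vote $\Lambda$ (keeping its original weight), $c$ is still in the winning set. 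The algorithm therefore enumerates every linear order on $\mathcal C$ starting with $c$, has all manipulators cast that vote with their given weights, runs Schulze's rule on $P^{NM}$ together with this uniform manipulator profile, and returns YES iff $c$ lies in the winning set for at least one choice of $\Lambda$.

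For correctness, one direction is trivial (any uniform manipulation certified by the algorithm is a valid manipulation); the other is exactly Theorem~\ref{thm:homogeneity} applied with $P=P^{NM}\cup P^M$ and $P_0=P^M$. For complexity, there are $(m-1)!$ candidate orderings, and for each one evaluating Schulze's rule reduces to computing all maximum-strength paths in the WMG on $m$ vertices, which takes $O(m^3)$ time via a Floyd-Warshall-style recurrence. When $m$ is bounded, $(m-1)!\cdot m^3=O(1)$, so the entire procedure runs in time polynomial in the input size (which includes the bit-length of the weights, used only when building the WMG).

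The main subtlety, and the only place that needs care, is the passage from Theorem~\ref{thm:homogeneity}, which is phrased for ordinary (unweighted) profiles, to the weighted setting of WCM. I would handle this by noting that the proof of Theorem~\ref{thm:homogeneity} argues entirely at the level of WMG edge weights and path strengths; replacing ``number of votes of a given type'' by ``total weight of votes of a given type'' leaves every inequality in the proof unchanged, so the homogenization conclusion holds verbatim for weighted profiles as well. Once this observation is made, the corollary follows immediately.
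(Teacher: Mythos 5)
Your proposal is correct and follows essentially the same route as the paper: enumerate the $(m-1)!$ identical-vote manipulations, justified by Theorem~\ref{thm:homogeneity}. The extra care you take in unfolding weighted votes into copies to invoke the theorem (while still running the algorithm directly on the WMG so large weights cause no blow-up) is a detail the paper leaves implicit, but it does not change the argument.
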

\begin{proof}
As the number of candidates is bounded we can enumerate all possible distinct votes in polynomial time. From Theorem~\ref{thm:homogeneity} it follows that it is sufficient to consider manipulations where all manipulators vote identically.
\end{proof}

%

\section{Unweighted Coalitional Manipulation}

In this section we present our main result: co-winner UCM is polynomial for any number of manipulators.
This closes an open question raised in~\cite{rankedpairs2}.
By Theorem~\ref{thm:homogeneity}, $(P^{NM},c,M)$ is a Yes-instance for co-winner UCM if and only if there is a vote $\succ'$ such that $c\in \mathcal W_{P^{NM} \cup P^{M}}$ where votes in $P^{M}$ corresponds to $\succ'$.
It remains to decide if such a vote $\succ'$ exists.

As in the weighted case, we denote $(G,w)=(G_P,w_P)$ and $(G',w')=(G_{P'},w_{P'})$ the WMGs of the voting profiles $P = P^{NM}$ and $P'=P^{NM} \cup P^{M}$ with arc weight functions $w$ and $w'$, respectively,  and  $S_{P'}(x,y)$ denotes the maximum strength of a path from $x$ to $y$ in $G'$.

First, we give a high-level description of the two-stage algorithm.
In the first stage, we run a preprocessing procedure on $G$
that aims to identify a set of necessary constraints on the strengths $S_{P'}(x,y)$, such as
$S_{P'}(x,y)$ must be equal to $S_P(x,y)+|M|$. The procedure is based on a set of rules
that enforce necessary conditions for $c$ to win, namely, $S_{P'}(c,x) \geqslant S_{P'}(x,c)$ must hold.
If the preprocessing procedure detects a failure then there is no set of votes for $M$ such that $c$ becomes a winner.
The pseudocode for the first stage of the algorithm is given in Algorithm~\ref{alg:bounds}.
Section~\ref{ss:first_stage} proves the correctness of Algorithm~\ref{alg:bounds}.
If no failure is detected by applying these rules during the preprocessing stage, we show
that a manipulation exists and provide a constructive procedure that finds a manipulation.
The pseudocode for the second stage of the algorithm is given in Algorithm~\ref{alg:spanning_tree_ucm}.
Here, the algorithm traverses vertices in $G$ in a specific order, which defines
the manipulators votes. Section~\ref{ss:second_stage} proves the correctness of Algorithm~\ref{alg:spanning_tree_ucm}.

\begin{algorithm}[tb]
\caption{\textsc{PreprocessingBounds}.}\label{alg:bounds}
\vspace*{-0.4cm}
  \begin{tabbing}
    .....\=.....\=.....\=.....\=.....\=.....\=.............. \kill \\
\textbf{Input:}  \> \> \> a weighted digraph $(G=(V,E),w)=(G_P,w_P)$, \\
\> \> \> the strengths $S_P$ and \\
\> \> \> a distinguished candidate $c$.\\[1ex]
\textbf{for} $(x,y)\in V\times V$ \textbf{do}\\
\> $\overline{w}(x,y)= w(x,y)+|M|$ \\
\> $\underline{w}(x,y)= w(x,y)-|M|$\\
\> $U(x,y)= S_P(x,y)+|M|$\\
\textbf{while} no convergence \textbf{do}\\
\> /*  Rule 1  */\\
\> \textbf{for} $x\in V\setminus\{c\}$ \textbf{do}\\
\> \>  $U(x,c)=\min\{U(x,c),\ U(c,x)\}$ \\
\> /*  Rule 2  */\\
\> \textbf{for} $x \in V\setminus\{c\}$ \textbf{do}\\
\> \>  $V_{r} = \left\{y \in V : U(y,c) < U(x,c), y \neq c\right\}$ \\
\> \>  $E_{r} = \left\{(f,g) \in E :\ \overline w(f,g)<U(x,c)\right\} \cup V_r \times V \cup V \times V_r$ \\
\> \>  $G^x=\Big((V\setminus V_r), (E \setminus E_r)\Big)$\\
\> \>  \textbf{if} $G^x$ contains no $c$-$x$-path \textbf{then}\\
\> \>  \> $U(x,c)= U(x,c) -2$\\
\> /*  Rule 3  */\\
\> \textbf{for} $x \in V\setminus\{c\}$ \textbf{do}\\
\> \> \textbf{for} $y \in V\setminus\{x,c\}$ \textbf{do}\\
\> \> \> \textbf{if} $U(x,c)< \underline w(x,y) $\\
\> \> \> \> $U(y,c)= \min(U(y,c), U(x,c))$\\
\> \textbf{for} $x \in V\setminus\{c\}$ \textbf{do}\\
\> \> \textbf{if} $U(x,c) < S_P(x,c)-|M|$ \textbf{then}\\
\> \> \> return \textbf{FAIL}\\
return $U$
\end{tabbing}
\vspace*{-0.4cm}
\end{algorithm}

\subsection{Stage 1. Preprocessing}\label{ss:first_stage}
Algorithm~\ref{alg:bounds} uses a function $U(x,y)$, which for any two candidates $x$ and $y$, gives an upper bound for $S_{P'}(x,y)$.
Initially,
$U(x,y) := S_P(x,y) + |M|$
for each pair 
$(x,y)$. We also use the following notation
for an upper and lower bound of $w'(x,y)$:
 $\overline w(x,y)  := w(x,y) + |M|$   and
 $\underline w(x,y) := w(x,y) - |M|$.

In the first stage, Algorithm~\ref{alg:bounds} decreases $U(x,y)$ when it detects necessary conditions implying $S_{P'}(x,y)<U(x,y)$.
The algorithm is based on the following three reduction rules.
We show that these rules are sound in the sense that an application of a rule does not change
the set of solutions of the problem.
\begin{description}
 \item[Rule 1.] If there is a candidate $x$ such that $U(c,x) < U(x,c)$, then set $U(x,c) := U(c,x)$.
\end{description}
\begin{proposition}
Rule 1 is sound.
\end{proposition}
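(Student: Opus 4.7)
The plan is to show that Rule~1 preserves the invariant that, for every profile $P^M$ of the manipulators such that $c \in \mathcal W_{P^{NM}\cup P^M}$ and for every ordered pair $(x,y)$, we have $U(x,y) \geqslant S_{P'}(x,y)$, where $P' = P^{NM}\cup P^M$. Establishing this invariant for the initial values $U(x,y) = S_P(x,y)+|M|$ is the natural first step: each added manipulator vote can shift $w(a,b)$ by at most $1$ in either direction, so after appending $|M|$ votes we have $|w'(a,b)-w(a,b)| \leqslant |M|$ on every arc, and hence the strength of any path $\pi$ changes by at most $|M|$. Taking maxima over $x$-$y$-paths then gives $S_{P'}(x,y) \leqslant S_P(x,y)+|M| = U(x,y)$, so the initial bound is valid.

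Given this invariant before Rule~1 is applied, the argument for soundness is short. Assume $U(c,x) < U(x,c)$ and that $P^M$ is any manipulation making $c$ a co-winner. Since $c \in \mathcal W_{P'}$, the defining inequality of the winning set gives $S_{P'}(c,x) \geqslant S_{P'}(x,c)$. Combining this with the invariant,
\begin{equation*}
  S_{P'}(x,c) \;\leqslant\; S_{P'}(c,x) \;\leqslant\; U(c,x).
\end{equation*}
Hence replacing $U(x,c)$ by $U(c,x)$ still yields a valid upper bound on $S_{P'}(x,c)$ for every co-winner manipulation $P^M$. In particular, no manipulation is lost: the update does not exclude any profile that makes $c$ a co-winner, which is exactly what soundness of the rule requires.

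There is no real obstacle here beyond setting up the right invariant. The only thing to be careful about is that soundness is a statement about the set of \emph{solutions} to the UCM instance, not about the unconditional value of $S_{P'}(x,c)$; the inequality $S_{P'}(x,c) \leqslant S_{P'}(c,x)$ only holds under the assumption that $c$ is in the winning set of $P'$, which is precisely the situation we care about. Thus any manipulation $P^M$ that is consistent with the updated $U$-values is exactly the same as those consistent with the old ones, restricted to the successful manipulations, which proves Rule~1 sound.
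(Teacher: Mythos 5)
Your proof is correct and rests on the same idea as the paper's: since any successful manipulation $P'$ has $S_{P'}(x,c)\leqslant S_{P'}(c,x)\leqslant U(c,x)$, lowering $U(x,c)$ to $U(c,x)$ discards no solutions. The paper states this in one line (contrapositively: if $S_{P'}(x,c)>S_{P'}(c,x)$ then $c\notin\mathcal W_{P'}$), while you additionally make explicit the invariant that $U$ upper-bounds $S_{P'}$, which the paper leaves implicit; this is a useful clarification but not a different argument.
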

\begin{proof}
To see that Rule 1 is sound, suppose $S_{P'}(x,c) > S_{P'}(c,x)$. But then, $c \notin W_{P'}$.
\end{proof}

To state the next reduction rule, define for any candidate $x$ the directed graph $G^x$ obtained from
$G$ by removing all vertices $y$ with $U(y,c)<U(x,c)$ and all arcs $(y,z)$ such that $\overline w(y,z) < U(x,c)$.

\begin{description}
 \item[Rule 2.] If there is a candidate $x$ such that $G^x$ has no directed path from $c$ to $x$, then set $U(x,c) := U(x,c)-2$.
\end{description}
\begin{proposition}
Rule 2 is sound.
\end{proposition}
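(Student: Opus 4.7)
The plan is to show that after applying Rule 2, the new upper bound $U(x,c)-2$ is still a valid upper bound on $S_{P'}(x,c)$ for every valid manipulation $P^M$, maintaining the invariant $S_{P'}(y,c)\leqslant U(y,c)$ for all $y$. I will argue by contradiction: assume Rule 2 fires for $x$ (so $G^x$ has no $c$-$x$-path) and yet there exists a manipulators' profile $P^M$ with $S_{P'}(x,c)\geqslant v$, where $v$ is the value of $U(x,c)$ before the update. I will then derive a $c$-$x$-path in $G^x$, contradicting the premise.

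The core of the argument is to take an optimal $c$-$x$-path in $G'$ of strength $\geqslant v$, which must exist since $c\in\mathcal W_{P'}$ forces $S_{P'}(c,x)\geqslant S_{P'}(x,c)\geqslant v$, and show that it survives the two deletion operations defining $G^x$. For the arc deletions, I use the fact that a single manipulator's vote can change $w$ by at most $1$, so $w'(y,z)\leqslant w(y,z)+|M|=\overline w(y,z)$; hence every arc $(y_i,y_{i+1})$ on the path satisfies $\overline w(y_i,y_{i+1})\geqslant w'(y_i,y_{i+1})\geqslant v$, so it is not deleted. For the vertex deletions, I take any internal vertex $y_i$ and concatenate the suffix of the path from $y_i$ to $x$ (strength $\geqslant v$) with a critical $x$-$c$-path (strength $S_{P'}(x,c)\geqslant v$) to obtain a $y_i$-$c$-path of strength $\geqslant v$ in $G'$; this gives $S_{P'}(y_i,c)\geqslant v$, and by the inductive invariant $U(y_i,c)\geqslant S_{P'}(y_i,c)\geqslant v=U(x,c)$, so $y_i$ is not deleted.

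Combining these two observations, the entire $c$-$x$-path lies in $G^x$, contradicting the hypothesis of Rule 2. Therefore $S_{P'}(x,c)<v$. To conclude the strengthened bound $S_{P'}(x,c)\leqslant v-2$ rather than just $v-1$, I invoke the parity remark from Section~\ref{sec:definitions}: every weight $w'(y,z)$ has the same parity as $|P^{NM}|+|M|$, and so does every path strength and every initial value $U(y,c)=S_P(y,c)+|M|$; the invariants of Rules~1 and~3, which only copy existing $U$-values, preserve this parity, so $v$ and $S_{P'}(x,c)$ have the same parity and the strict inequality forces a gap of at least $2$.

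The only subtle point is the inductive invariant $U(y,c)\geqslant S_{P'}(y,c)$ on which the vertex-survival step depends; I will handle this by observing that the rules are applied iteratively and that each rule's soundness proof is carried out under the hypothesis that $U$ was a valid upper bound before the application, so soundness of Rule~2 here means: if the invariant held before the update, it still holds after. The main obstacle is conceptual rather than technical, namely being careful to distinguish the value of $U(x,c)$ before and after the update and to verify that the parity argument goes through unchanged after every rule application.
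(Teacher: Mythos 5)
Your proof is correct and follows essentially the same route as the paper's: both derive a contradiction between the winning condition $S_{P'}(c,x)\geqslant S_{P'}(x,c)\geqslant U(x,c)$ and the absence of a $c$-$x$-path in $G^x$, and both close the gap from a strict inequality to the decrement by $2$ via the parity invariant. The only difference is presentational: you argue that a critical $c$-$x$-path in $G'$ would survive in $G^x$, spelling out the concatenation step showing that a removed vertex cannot lie on such a path, whereas the paper states the contrapositive (every $c$-$x$-path killed by the deletions has strength below $U(x,c)$ in $G'$) and leaves that step implicit.
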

\begin{proof}
Suppose the premises of the rule hold, and, for the sake of contradiction, suppose there exists a path in $G'$ from $x$ to $c$ with strength
$s$, where $s$ equals $U(x,c)$ before the application of the rule.
Since $G^x$ has no directed path from $c$ to $x$,
all directed paths in $G$ from $c$ to $x$ pass either through a vertex $y$ with $U(y,c)<s$ or through an arc $(y,z)$ such that $\overline w(y,z) < s$. Since any such path  has strength less than $s$, we have that $S_{P'}(c,x)<s$.
But, since $c$ belongs to the winning set in $G'$, we have that $S_{P'}(c,x) \geqslant S_{P'}(x,c) \geqslant s$, a contradiction.
Thus, $S_{P'}(x,c) < s$.
The soundness of Rule 2 now follows from the fact that all $S_{P'}(y,z)$ have the same parity as $|NM|+|M|$, $y,z\in V$, and we maintain the invariant that all $U(\cdot, \cdot)$ have this parity.
\end{proof}

\begin{description}
 \item[Rule 3.] If there are candidates $x,y \neq c$ such that $U(x,c) < \underline w(x,y)$ and $U(y,c)>U(x,c)$, then set $U(y,c):=U(x,c)$.
\end{description}
\begin{proposition}
Rule 3 is sound.
\end{proposition}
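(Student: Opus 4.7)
The plan is to show that, under the hypotheses of Rule~3, every $y$-$c$-path in $G'$ has strength at most $U(x,c)$, which justifies the update $U(y,c) := U(x,c)$. The key idea is to exploit the lower bound $\underline{w}(x,y)$ on $w'(x,y)$ in order to ``graft'' any $y$-$c$-path onto $x$ via the arc $(x,y)$ without weakening it below $U(x,c)$.

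First I would establish the chain
\[
w'(x,y) \;\geqslant\; \underline{w}(x,y) \;>\; U(x,c) \;\geqslant\; S_{P'}(x,c),
\]
using, in order, the fact that no matter how the $|M|$ manipulators vote, $w'(x,y)$ can decrease by at most $|M|$ compared with $w(x,y)$; the premise of Rule~3; and the invariant (already verified when Rules~1 and~2 were shown sound) that the current value of $U(x,c)$ is a valid upper bound for $S_{P'}(x,c)$.

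Next, for an arbitrary $y$-$c$-path $\pi$ in $G'$, I would prepend the arc $(x,y)$ to obtain an $x$-$c$-path whose strength equals $\min\{w'(x,y),\, w'(\pi)\}$. By the above chain, this strength is at most $S_{P'}(x,c) \leqslant U(x,c) < w'(x,y)$, so the minimum must be attained by $w'(\pi)$, giving $w'(\pi) \leqslant U(x,c)$. Taking the maximum over all such $\pi$ yields $S_{P'}(y,c) \leqslant U(x,c)$, so the updated value of $U(y,c)$ is still a valid upper bound on $S_{P'}(y,c)$, which is exactly the soundness claim.

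The proof is essentially a one-line path-concatenation argument, so there is no serious obstacle; what one has to be careful about is merely that the update is well-defined and invariant-preserving. The hypothesis $U(y,c) > U(x,c)$ ensures the rule actually tightens the bound (otherwise it would be vacuous), and because we are copying an existing value of $U$ rather than decrementing by $1$, the parity invariant that was needed in the proof of Rule~2 is automatically preserved here.
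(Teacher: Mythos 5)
Your proof is correct and uses essentially the same argument as the paper: concatenating the arc $(x,y)$ onto a $y$-$c$-path and using $w'(x,y)\geqslant \underline w(x,y) > U(x,c)$ together with the invariant that $U(x,c)$ upper-bounds $S_{P'}(x,c)$. The only difference is that you phrase it directly while the paper argues by contradiction from a critical $y$-$c$-path; the content is identical.
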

\begin{proof}
Suppose $S_{P'}(y,c) > U(x,c)$ and $\pi$ is a critical path from $y$ to $c$ in $G'$. But then, the path $x\cdot\pi$, obtained by concatenating $x$ and $\pi$, has
strength $\min \{w'(x,y), S_{P'}(y,c)\}$. Since $w'(x,y)\geqslant \underline w(x,y) > U(x,c)$, the strength of this directed path from $x$ to $c$ is strictly greater than $U(x,c)$, contradicting our assumption that $U(x,c)$ is a necessary upper
bound for $S_{P'}(x,c)$.
\end{proof}

We remark that Rules 1-3 decrement $U(\cdot,c)$ when necessary conditions are found that require a smaller upper bound for $S_{P'}(\cdot,c)$.
Should at any time such a value $U(x,c)$ become smaller than $S_P(x,c)-|M|$, then there are no votes for $M$ that make $c$ a winner.
In this case, the preprocessing algorithm returns \textbf{FAIL}.

\begin{theorem}
Algorithm~\ref{alg:bounds} is sound.
\end{theorem}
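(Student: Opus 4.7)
The plan is to formalize the natural invariant that the algorithm is designed to maintain, namely that at every point during execution, $U(x,c)\geqslant S_{P'}(x,c)$ for every extension $P' = P^{NM}\cup P^M$ with $c\in\mathcal W_{P'}$ (and similarly for the trivially-maintained entries $U(x,y)$ not touched by any rule). Soundness of the algorithm will then follow: if it outputs $U$, the bounds are valid; if it outputs \textbf{FAIL}, no such $P'$ can exist.

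First I would verify the invariant at initialization. Since each manipulator in $M$ can swing the pairwise tally $w_P(x,y)$ by at most $1$ in either direction, we have $|w'(x,y)-w(x,y)|\leqslant |M|$ for every arc. Since the strength of a path is the minimum of its arc weights, the strength of any fixed $x$-$y$-path changes by at most $|M|$, so $S_{P'}(x,y)\leqslant S_P(x,y)+|M|=U(x,y)$. Thus the invariant holds before the while-loop begins. Each iteration of the loop then applies Rules 1, 2, and 3, and the three propositions already proved show that each of these rules preserves the invariant, so by induction on the number of rule applications the invariant persists until convergence.

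For the \textbf{FAIL} branch, suppose the algorithm returns \textbf{FAIL} because some $x\neq c$ satisfies $U(x,c)<S_P(x,c)-|M|$. Let $\pi$ be a critical $x$-$c$-path in $G$. By the same per-arc argument as above, $w'(\pi)\geqslant w(\pi)-|M|=S_P(x,c)-|M|$, so $S_{P'}(x,c)\geqslant S_P(x,c)-|M|>U(x,c)$ for every admissible $P'$. This contradicts the invariant, so no $P'$ with $c\in\mathcal W_{P'}$ exists and reporting \textbf{FAIL} is justified. Combined with the invariant for the non-\textbf{FAIL} branch, Algorithm~\ref{alg:bounds} is sound.

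The main obstacle is conceptual rather than technical: one has to be careful that the induction is genuinely over \emph{all} admissible manipulations simultaneously, because the rules never refer to any specific $P'$ but instead exploit properties that \emph{every} winning profile must satisfy (e.g.\ Rule 1 uses $S_{P'}(c,x)\geqslant S_{P'}(x,c)$, which follows from $c\in\mathcal W_{P'}$). Once the invariant is phrased with this universal quantifier, the three soundness propositions plug in directly and the argument for \textbf{FAIL} is the elementary parity-free lower bound derived above; no further machinery is required.
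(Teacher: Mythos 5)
Your proof is correct and follows the same approach as the paper, which simply observes that the algorithm implements Rules 1--3 and appeals to their soundness propositions. You additionally make explicit two steps the paper leaves implicit --- that the invariant $U(x,y)\geqslant S_{P'}(x,y)$ holds at initialization, and that the \textbf{FAIL} condition $U(x,c)<S_P(x,c)-|M|$ genuinely certifies infeasibility via $S_{P'}(x,c)\geqslant S_P(x,c)-|M|$ --- which makes your write-up a more complete version of the same argument.
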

\begin{proof}
Algorithm~\ref{alg:bounds} implements  Rules~1--3. As these rules are sound, the algorithm is sound.
\end{proof}

Consider how Algorithm~\ref{alg:bounds} works on an example.

\begin{figure}[tb]
   \centering
   \includegraphics[width=0.6\linewidth]{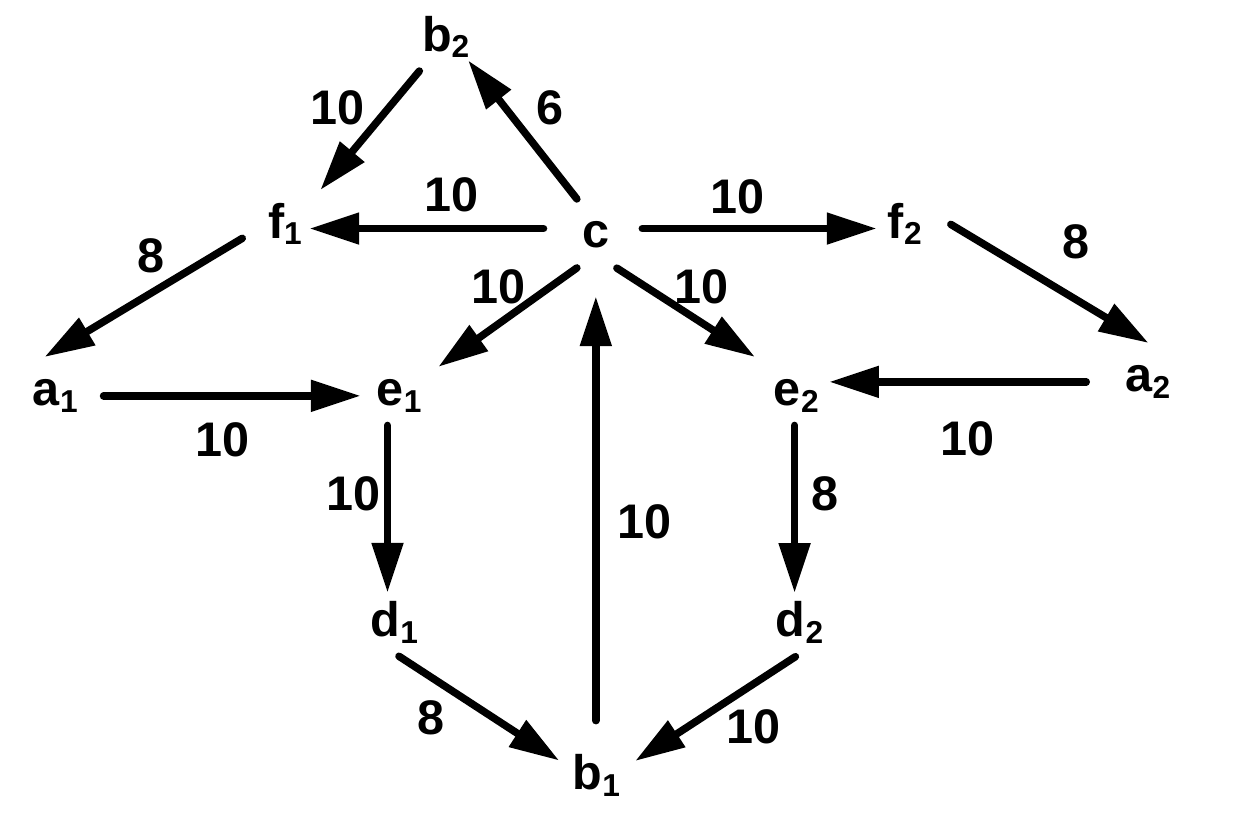}
   \caption{
       The WMG $\mathbf{G_P}$  from Example~\ref{exm:ucm_choices}
       }
   \label{fig:example5}
 \end{figure}

\begin{example}\label{exm:ucm_choices}
Consider an election with eleven alternatives $\{a_1,a_2,b_1,b_2,c,d_1,d_2,e_1,e_2,f_1,f_2\}$ with the WMG in
Figure~\ref{fig:example5}, where $|M| = 1$ and $c$ is the preferred candidate.
We note that there are two candidates $b_1$ and $b_2$ such that $S_{P}(c,x) = S_{P}(x,c)-2$, $x \in \{b_1,b_2\}$.
For candidate $b_1$ there are two ways to increase $S_P(c,b_1)$. The first way is to increase the strength of the $c$-$e_1$-$d_1$-$b_1$-path by ranking  $d_1 \succ b_1$. The second way is to increase the strength of the $c$-$e_2$-$d_2$-$b_1$-path by ranking  $e_2 \succ d_2$. If we select the first way then an extension of $d_1 \succ b_1$ to any total
order leads to $c \notin W_{P'}$. If we select the second way then we can build a
successful manipulation.  We show that Algorithms~\ref{alg:bounds}--~\ref{alg:spanning_tree_ucm}
successfully construct a valid manipulation. We start with Algorithm~\ref{alg:bounds}.
Table~\ref{table:first_stage} shows  execution of Algorithm~\ref{alg:bounds} on this problem instance.
\begin{table}[tb]
  \centering
  \begin{tabular}{|c|c|c|c|c|c|c|c|c|c|c|}
    \hline
             &   \multicolumn{10}{|c|}{Alternatives ${\cal C} \setminus \{c\}$ } \\
             \cline{2-11}
            & $f_1$ & $f_2$ & $a_1$ & $a_2$ & $e_1$ & $e_2$ & $d_1$ & $d_2$ & $b_1$ & $b_2$ \\
            \hline
            \multicolumn{11}{|c|}{Initial values} \\ \hline
   $U(c,\cdot)$  & $11$ & $11$ & $9$& $9$ & $11$ & $11$ & $11$ & $9$ & $9$ & $7$  \\
   $U(\cdot,c)$  & $9$ & $9$ & $9$ & $9$ & $9$ & $9$ & $9$ & $11$ & $11$ & $9$ \\
      \hline
   \multicolumn{11}{|c|}{Rule 1 updates $U(d_2,c)$, $U(b_1,c)$ and $U(b_2,c)$} \\
   \hline
   $U(c,\cdot)$  & $11$ & $11$ & $9$ & $9$ & $11$ & $11$ & $11$ & $9$ & $9$ & $7$  \\
   $U(\cdot,c)$  & $9$ & $9$ & $9$ & $9$ & $9$ & 9 & 9 & $\mathbf{9}$ & $\mathbf{9}$ & $\mathbf{7}$ \\
      \hline
   \multicolumn{11}{|c|}{Rule 3 updates $U(f_1,c)$ as $U(b_2,c)=7$ and $\underline w(b_2,f_1) = 9$} \\
   \hline
   $U(c,\cdot)$  & $11$ & $11$ & $9$ & $9$ & $11$ & $11$ & $11$ & $9$ & $9$ & $7$  \\
   $U(\cdot,c)$  & $\mathbf{7}$ & $9$ & $9$ & $9$ & $9$ & $9$ & $9$ & $9$ & $9$ & $7$ \\
      \hline
   \multicolumn{11}{|c|}{Rule 2 updates $U(a_1,c)$ (Figure~\ref{fig:example5ba}(a) for $G^{a_1}$)} \\
   \hline
   $U(c,\cdot)$  & $11$ & $11$ & $9$ & $9$ & $11$ & $11$ & $11$ & $9$ & $9$ & $7$  \\
   $U(\cdot,c)$  & $7$ & $9$ & $\mathbf{7}$ & $9$ & $9$ & $9$ & $9$ & $9$ & $9$ & $7$ \\
       \hline
   \multicolumn{11}{|c|}{Rule 3 updates  $U(e_1,c)$ and $U(d_1,c)$} \\
   \hline
   $U(c,\cdot)$  & $11$ & $11$ & $9$ & $9$ & $11$ & $11$ & $11$ & $9$ & $9$ & $7$  \\
   $U(\cdot,c)$  & $7$ & $9$ & $7$ & $9$ & $\mathbf{7}$ & $9$ & $\mathbf{7}$ & $9$ & $9$ & $7$ \\
    \hline
  \end{tabular}
  \caption{Execution of Algorithm~\ref{alg:bounds} on Example~\ref{exm:ucm_choices}.
  $\mathbf{U(c,\cdot)}$/$\mathbf{U(\cdot,c)}$ stands for the upper bound value $\mathbf{U(c,c')}$/$\mathbf{U(c',c)}$, where $\mathbf{c'}$ is the alternative
  in the corresponding column, $\mathbf{c' \in \ } \mathbfcal{ C} \mathbf{\setminus \{c\}}$.}\label{table:first_stage}
\end{table}
\qed
\end{example}
\begin{figure}[tb]
   \centering
   \includegraphics[width=1\linewidth]{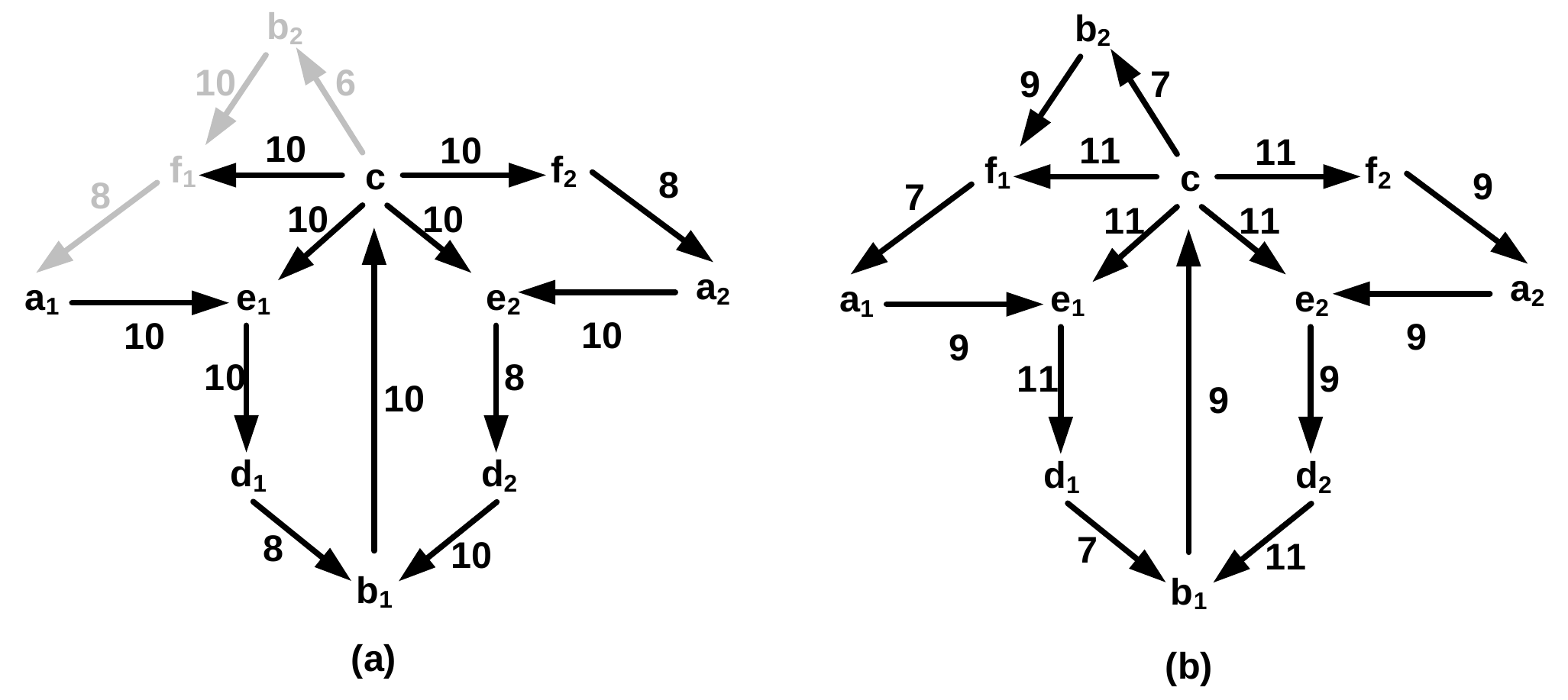}
   \caption{
       (a) $\mathbf{G^{a_1}}$. Deleted arcs and vertices are in gray. There is no path from $\mathbf{c}$
       to $\mathbf{a_1}$; (b) WMG $\mathbf{G_{P  \cup\{\Lambda\}}}$ from Example~\ref{exm:ucm_choices}  where $\mathbf{\Lambda}$ is a valid manipulation.
       }
   \label{fig:example5ba}
 \end{figure}

\subsection{Stage 2. Construct manipulators' votes}~\label{ss:second_stage}

  \begin{algorithm}[tb]
\caption{Construction of ordering $\Lambda$}\label{alg:spanning_tree_ucm}
\vspace*{-0.4cm}
    \begin{tabbing}
          .....\=.....\=.....\=.....\=.....\=.............................. \kill \\
\textbf{Input:}\>\>\> a weighted digraph $(G=(V,E),w)=(G_P,w_P)$, \\
\>\>\> the strengths $S_P$, \\
\>\>\> a distinguished candidate $c$ and\\
\>\>\> the function $U$ returned by Algorithm \ref{alg:bounds}.\\[1ex]
\textbf{for} $(x,y)\in V\times V$ \textbf{do}\\
\> $\overline{w}(x,y)= w(x,y)+|M|$ \\
Initialize $F = \{c\}$, $X = \mathcal C \setminus \{c\}$,
$lastv = c$ and $\Lambda = \{\}$.\\
\textbf{for} $i=1,\ldots,|V|-1$ \textbf{do}\\
\> $D = \max\{U(y,c)\ :\ y\in X\}$\\
\> Choose  $x\in F$ and $y\in X$ with $\overline{w}(x,y) \geqslant U(y,c) = D$\\
\> $F= F \cup \{y\}$\\
\> $X = X \setminus \{y\} $\\
\> $\Lambda = \Lambda \cup \{lastv \succ y\}  $\\
\> $lastv = y  $\\
return $\Lambda$
    \end{tabbing}
    \vspace*{-0.4cm}
  \end{algorithm}

Algorithm~\ref{alg:spanning_tree_ucm} constructs a linear order $\Lambda$ based on the following greedy procedure.
Initially, $\Lambda = \{\}$, $c$ is the top candidate, $lastv =c$,
the frontier $F = \{c\}$ and the set of unreached vertices $X = \mathcal C \setminus \{c\}$.
During the execution of the algorithm, $\Lambda$ is a linear order on $F$ and contains an element $x \succ y$ for any two consecutive vertices $x,y$ in this order.
The vertex $lastv$ is the last vertex in this order $c \succ \dots \succ lastv$.

While $\Lambda$ is not a total order,
the algorithm adds one of the unreached vertices $y$ to the end of a partial order $\Lambda$ satisfying
the following conditions: $x \in F$, $y \in X$,
$U(y,c)=D$ and $\overline w(x,y) \geqslant D$, where $D$ is the maximum value
$U(y,c)$ among all unreached vertices $y \in X$.

\begin{theorem}
Algorithm~\ref{alg:spanning_tree_ucm} constructs a total order $\Lambda$ with top element $c$.
Moreover, for any vertex $x\in V\setminus \{c\}$, there is a $c$-$x$-path
$\pi = (c=x_1,\ldots,x_p=x)$ such that $\overline w(x_i,x_{i+1}) \geqslant U(x,c)$
and $x_i \succ x_{i+1} \in \Lambda$, $i=1,\ldots,p-1$.
\end{theorem}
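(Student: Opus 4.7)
The plan is to prove both parts of the theorem simultaneously by induction on the iteration index $i$, maintaining the invariant that at the end of iteration $i$ (a) the algorithm has successfully selected a valid pair $(x,y)$, and (b) for every $y\in F$ there is a $c$-$y$-path in $G$ with all arc weights $\overline{w}(\cdot,\cdot)\geqslant U(y,c)$ that respects the current $\Lambda$. The total-order-with-$c$-on-top conclusion is essentially immediate: $\Lambda$ begins with $c$, and each iteration appends exactly one new vertex by adding the constraint $lastv\succ y$; after $|V|-1$ iterations every vertex has been consumed, so $\Lambda$ is the full linear order with $c$ as its top element.

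The central observation is that the values $D_i$ chosen by the algorithm are non-increasing in $i$: $D_{i+1}=\max\{U(y,c):y\in X_{i+1}\}\leqslant D_i$ because $X_{i+1}\subset X_i$ and the element of $X_i$ realizing $D_i$ was removed. Consequently, every vertex currently in $F$ has $U(\cdot,c)\geqslant D_i$ at iteration $i$. I would record this as a lemma before entering the main induction.

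For part (a) of the invariant — that the algorithm can always proceed — I would fix any $y^{*}\in X_i$ attaining $U(y^{*},c)=D_i$ and invoke Rule 2 of Algorithm~\ref{alg:bounds}, which, upon convergence of preprocessing, guarantees that $G^{y^{*}}$ contains a directed $c$-$y^{*}$-path $\pi^{*}$. By construction of $G^{y^{*}}$, every vertex $z$ on $\pi^{*}$ satisfies $U(z,c)\geqslant D_i$ and every arc $(u,v)\in\pi^{*}$ satisfies $\overline{w}(u,v)\geqslant D_i$. Walking along $\pi^{*}$ from $c\in F$ to $y^{*}\in X_i$, let $(x,y)$ be the first arc crossing from $F$ to $X_i$. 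Then $U(y,c)\geqslant D_i$ and by maximality $U(y,c)=D_i$, while $\overline{w}(x,y)\geqslant D_i$; this is a valid choice for the algorithm.

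For part (b) — the path property — suppose the algorithm selects the pair $(x,y)$ at iteration $i$. Since $x$ was added in an earlier iteration $j<i$, we have $U(x,c)=D_j\geqslant D_i=U(y,c)$. By the inductive hypothesis applied to $x$, there is a $c$-$x$-path $\pi_x=(c=x_1,\ldots,x_k=x)$ in $G$ with $\overline{w}(x_t,x_{t+1})\geqslant U(x,c)\geqslant U(y,c)$ for every $t$, and consecutive vertices satisfying $x_t\succ x_{t+1}$ in $\Lambda$. Appending the arc $(x,y)$ (whose weight satisfies $\overline{w}(x,y)\geqslant D_i=U(y,c)$) yields a $c$-$y$-path with the required weight bound; the $\Lambda$-ordering on this extended path is preserved because every vertex of $\pi_x$ lies in $F$ at iteration $i$ and was therefore appended to $\Lambda$ before $y$. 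The main obstacle is the part-(a) step: it is where the structural work done by Rule 2 during preprocessing is cashed in, and exhibiting the boundary-crossing arc $(x,y)$ on $\pi^{*}$ with the right simultaneous bounds on both $U(y,c)$ and $\overline{w}(x,y)$ is what makes the greedy choice well defined.
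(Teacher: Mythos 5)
Your proposal is correct and follows essentially the same route as the paper's proof: it uses the non-applicability of Rule~2 after preprocessing to extract a $c$-$y^*$-path in $G^{y^*}$ and take the first arc crossing from $F$ into $X$, and then proves the path property by induction using the monotonicity of the selected values $U(\cdot,c)$. Your explicit lemma that the values $D_i$ are non-increasing just makes precise a step the paper uses implicitly.
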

\begin{proof}
First, we need  to prove that the algorithm can always add a vertex $y$ to the order $\Lambda$ satisfying the conditions above.
Let $z$ be any candidate from $X$ such that $U(z,c)=D$.
Since Rule 2 does not apply, the subgraph $G^z$ has a directed path from $c$ to $z$.
Let $(x,y)$ be the arc on this path with $x\in F$ and $y \in X$ (we could possibly have that $y=z$).
Also, by Rule 2 we have that $U(y,c) \geqslant U(z,c)$ and that $\overline w(x,y) \geqslant U(z,c)$.
Thus, $U(y,c)=D$ and $\overline w(x,y) \geqslant D$, which means that $(x,y)$ satisfies the conditions of the alternative $y$
to be added to $\Lambda$.

We prove the second statement by induction. In the base case, $F =\{c\}$ and we add
$y$ such that $\overline{w}(c,y) \geqslant U(y,c)$.  Hence, $\pi = (c=x_1,x_2=y)$,
$\overline w(c,y) \geqslant U(y,c)$ and $c \succ y \in \Lambda$.

Suppose, the statement holds for $i-1$ steps.
Let $(x,y)$ be the arc such that  $x\in F$ and $y \in X$,
$\overline w(x,y) \geqslant U(y,c)= D$ that we
add at the $i$th step. By the induction hypothesis, we know that
there is a $c$-$x$-path
$\pi = (c=x_1,\ldots,x_{p}=x)$ such that $\overline w(x_j,x_{j+1}) \geqslant U(x,c)$
and $x_j \succ x_{j+1} \in \Lambda$, $j=1,\ldots,p-1$, $p \leqslant i-1$.
By the selection of $y$, we get that $\overline w(x,y) \geqslant U(y,c)$.
By Algorithm~\ref{alg:spanning_tree_ucm}, we know that $U(x,c) \geqslant U(y,c)$.
Hence,  $\overline w(x_j,x_{j+1}) \geqslant U(x,c) \geqslant U(y,c)$, $j=1,\ldots,p-1$.
As we add $x \succ y$ to $\Lambda$ we get that there is a $c$-$y$-path
$\pi = (c=x_1,\ldots,x_{p}=x,x_{p+1}=y)$ such that $\overline w(x,y) \geqslant U(y,c)$
and $x_j \succ x_{j+1} \in \Lambda$, $j=1,\ldots,p$.
\end{proof}

This order $\Lambda$ defines the vote $\succ '$ of the manipulators.

\begin{theorem}\label{thm:ucm_manip}
Consider the order $\Lambda$ returned by Algorithm~\ref{alg:spanning_tree_ucm}.
Then  $c \in W_{P'}$, where $ P' = P^{NM} \cup \bigcup_{i=1}^{|M|} \{\Lambda\}$.
\end{theorem}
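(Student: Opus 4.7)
The plan is to establish, for every $x \in \mathcal{C}\setminus\{c\}$, the two inequalities $S_{P'}(c,x) \geq U(x,c)$ and $S_{P'}(x,c) \leq U(x,c)$; together they immediately yield $c \in \mathcal{W}_{P'}$.

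The first inequality is almost free from the preceding theorem, which supplies a $c$-$x$-path whose arcs all respect $\Lambda$ and satisfy $\overline{w}(x_i,x_{i+1}) \geq U(x,c)$. Since every manipulator casts $\Lambda$, each such arc has $w'(x_i,x_{i+1}) = w(x_i,x_{i+1}) + |M| = \overline{w}(x_i,x_{i+1})$, so this path witnesses $S_{P'}(c,x) \geq U(x,c)$.

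The real work lies in the upper bound $S_{P'}(x,c) \leq U(x,c)$, which I would prove by an extremal counterexample argument: assume some $x$ violates this and choose such an $x$ lying as high as possible in $\Lambda$. Fix a critical $x$-$c$-path $\pi=(y_1,\ldots,y_p)=(x,\ldots,c)$ in $G'$, so that $w'(\pi) > U(x,c)$. Because $c$ sits at the top of $\Lambda$ and $x$ does not, there exists a smallest $j\geq 1$ with $y_{j+1}$ strictly above $x$ in $\Lambda$; by minimality, $y_j$ lies at or below $x$. Algorithm~\ref{alg:spanning_tree_ucm} adds vertices in non-increasing order of $U(\cdot,c)$, so $U(y_j,c) \leq U(x,c)$. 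Since $y_{j+1} \succ y_j$ in $\Lambda$, the manipulators reverse this arc and $w'(y_j,y_{j+1}) = \underline{w}(y_j,y_{j+1}) > U(x,c) \geq U(y_j,c)$.

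Two subcases remain, and dispatching them is the main obstacle. If $y_{j+1} = c$, then $w(y_j,c) = \underline{w}(y_j,c) + |M| > U(x,c) + |M| \geq U(y_j,c) + |M|$; using $S_P(y_j,c) \geq w(y_j,c)$ gives $U(y_j,c) < S_P(y_j,c) - |M|$, contradicting the final FAIL check of Algorithm~\ref{alg:bounds}. Otherwise $y_{j+1} \neq c$, and Rule~3 applies to $(y_j,y_{j+1})$, whose hypothesis $U(y_j,c) < \underline{w}(y_j,y_{j+1})$ is exactly what we derived; convergence of the rule therefore forces $U(y_{j+1},c) \leq U(y_j,c) \leq U(x,c)$. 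Combined with $U(y_{j+1},c) \geq U(x,c)$, which holds because $y_{j+1}$ is strictly above $x$ in $\Lambda$, equality holds throughout. Then the suffix $(y_{j+1},\ldots,c)$ of $\pi$ has strength at least $w'(\pi) > U(x,c) = U(y_{j+1},c)$, so $y_{j+1}$ is itself a counterexample strictly higher in $\Lambda$ than $x$, contradicting the extremal choice. The delicate point is that Rule~3 handles every reverse arc along $\pi$ except one that enters $c$ directly, and that remaining case is precisely what the FAIL check of Algorithm~\ref{alg:bounds} rules out.
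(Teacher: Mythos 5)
Your proof is correct and follows essentially the same strategy as the paper's: establish $S_{P'}(c,x)\geq U(x,c)\geq S_{P'}(x,c)$ for all $x\neq c$ via an extremal-counterexample argument that exploits the convergence of Rule~3 and the final \textbf{FAIL} check of Algorithm~\ref{alg:bounds}. The only difference is bookkeeping: the paper chooses the counterexample with the shortest critical path to $c$ and analyzes the first arc $(x,x_1)$ of that path, whereas you choose the counterexample highest in $\Lambda$ and analyze the first arc of the path that crosses above $x$ in $\Lambda$ --- both reduce to the same two facts about reversed arcs, so the arguments are interchangeable.
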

\begin{proof}
Due to the construction of $\Lambda$, we know that $S_{P'}(c,x) \geqslant U(x,c)$, $x\in V\setminus \{c\}$ as for each vertex $x$
there is a $c$-$x$-path $\sigma=(c=x_1,x_2,\dots,x_p=x)$ such that $\overline w(x_i,x_{i+1}) \geqslant U(x,c)$
and $x_i \succ x_{i+1} \in \Lambda$, $i=1,\ldots,p-1$.

Let us make sure that $S_{P'}(x,c) \leqslant U(x,c)$ for each candidate $x\in V\setminus \{c\}$. On the contrary, suppose there is a candidate $x$ such that $S_{P'}(x,c) > U(x,c)$ and suppose among all such vertices, $x$ has the shortest critical path to $c$. Denote by $\pi=(x,x_1,x_2,\dots,c)$ a shortest critical path from $x$ to $c$. Consider two cases depending on whether $x_1 =c$ or $x_1 \neq c$.

Suppose that $x_1 \neq c$.
We have that $S_{P'}(x_1,c) \geqslant S_{P'}(x,c)$ since the path $\pi$ is critical.
Therefore, $U(x_1,c) > U(x,c)$ by the selection of $x$ and $\pi$.
Since candidates are added by non-increasing values of $U(\cdot,c)$ to $\Lambda$,
$x_1$ was added before $x$, so that $x_1\succ' x$.
Thus, $w'(x,x_1) = \underline w(x,x_1)$.
By Reduction Rule 3, we have that $\underline w(x,x_1) \leqslant U(x,c)$.
Thus, $w'(x,x_1) \leqslant U(x,c)$, contradicting that $\pi$ has strength $> U(x,c)$ in $G'$.

Suppose that $x_1 = c$. In this case, $\pi = (x,c)$ and $S_{P'}(x,c) = w'(x,c)$.
As $c$ is the top element of $\Lambda$ we have that $ w'(x,c) = \underline w(x,c) = w(x,c) -|M|$.
As Algorithm~\ref{alg:bounds} did not detect a failure, we know that
$U(x,c) \geqslant S_P(x,c)-|M|$. Moreover, $S_P(x,c) \geqslant w(x,c)$,
by definition of the critical path. Therefore,
$U(x,c) \geqslant S_P(x,c)-|M| \geqslant  w(x,c) -|M| = w'(x,c) = S_{P'}(x,c)$.
Hence, $S_{P'}(x,c) \leqslant U(x,c)$, contradicting that $\pi$ has strength $> U(x,c)$ in $G'$.
\end{proof}

{Note that Corollary~\ref{col:wcm} does not follow from Theorem~\ref{thm:ucm_manip}, because Algorithm~\ref{alg:bounds} takes $O(w_{max}|V|^3)$ time, where $w_{max} = \max_{(x,y) \in V \times V}w(x,y)$. As $w_{max}$ can be $O(2^{|V|})$,
Algorithm~\ref{alg:bounds} takes exponential number of steps in WCM.}

\begin{example}
Consider how Algorithm~\ref{alg:spanning_tree_ucm} works on Example~\ref{exm:ucm_choices}.
The algorithm traverses $G$ by vertices ordered by the value $U(x,c)$, $x\in \mathcal C  \setminus\{c\}$.
Initially, we start at $c$, and $F = \{c\}$ and $X = \mathcal C \setminus \{c\}$. We
compute $D = \max\{U(y,c)\ :\ y\in X\}$, $D = 9$.
We consider all vertices $y \in X$ such that $U(y,c) = 9$,
which is the set $Q = \{f_2,a_2,e_2,d_2, b_1\}$.
We select one of those vertices, $f_2$, that satisfies the condition on the value $\overline w(x,y)$, $x \in F$, $y \in X$:
$\overline w(c,f_2) = 11 \geqslant U(f_2,c) = 9$.
In the next four steps we add all elements of $Q$ and obtain a partial order $\Lambda = c \succ f_2 \succ a_2 \succ e_2 \succ b_1$.
The next maximum value $D = \max\{U(y,c)\ :\ y\in \mathcal C \setminus \{c,f_2,a_2,e_2,d_2,b_1\}$ is $7$.
The set of vertices such that $U(y,c)=7$ is $Q=\{f_1,a_1,e_1,d_1,b_2\}$. Hence, we add
these vertices to $\Lambda$ one by one and obtain a total order
$\Lambda = c \succ f_2 \succ a_2 \succ e_2 \succ b_1 \succ f_1 \succ a_1 \succ b_2 \succ e_1 \succ d_1$.
Figure~\ref{fig:example5ba}(b) shows the WMG  $G_{P \cup \{\Lambda\}}$. We omitted all arcs
of weight 1 for clarity.\qed
\end{example}

\section{Unique winner vs co-winner UCM}\label{sec:UCM}
In this section we consider the unweighted coalitional manipulation problem with a single manipulator that was considered in~\cite{rankedpairs2}. Parkes and Xia showed that the \emph{unique winner UCM} for Schulze's rule with a single manipulator can be solved in polynomial time. We emphasize that in this variant the aim is to make the preferred candidate $c$ the unique winner.
The aim of this section is to show that the proof from~\cite{rankedpairs2} cannot be extended to the co-winner UCM problem with one manipulator. This demonstrates that the co-winner UCM problem with one manipulator was not resolved in \cite{rankedpairs2}. We also extend our algorithm for co-winner UCM to the unique winner case.
Another reason to investigate the relation between properties of unique winner and co-winner manipulation problems is that
they are closely related to the choice of tie-breaking rules.
If the tie-breaking rule breaks ties against the manipulators then the
manipulators have to ensure that the preferred candidate
is the unique winner of an election. If the tie-breaking rule breaks ties in favor of the manipulators
then it is sufficient for the manipulators to guarantee that the preferred candidate is one of the
co-winners of the election to achieve the desired outcome.

The proof that the unique winner UCM is polynomial is based on the resolvability property \cite[Section 4.2.2]{Schulze2011}. The resolvability criterion states that any co-winner can be made a unique winner by adding a single vote.
\begin{description}
\item[Resolvability.] If $S_P(c,x)\geqslant S_P(x,c)$ for all candidates $x\in\mathcal C\setminus\{c\}$, then there is a vote $v$ such that $S_{P\cup\{v\}}(c,x)>S_{P\cup\{v\}}(x,c)$ for all candidates $x\in\mathcal C\setminus\{c\}$.
\end{description}
The proof of the property is constructive. Clearly, $c$ can be the unique winner in $P \cup \{v\}$ only if $c$ is a co-winner in $P$. The vote $v$ is constructed using two rules that we describe below. We denote $P = P^{NM}$ and $\{v\} =P^{M}$ to simplify notations.
\begin{description}
\item[(1)] For every alternative $x \in \mathcal \mathcal C \setminus \{c\}$, we require $y\succ x$ in the manipulator's vote $v$ where $y$ is the predecessor of $x$ on some strongest path from $c$ to $x$.
\item[(2)] For any $x,y \in \mathcal C \setminus \{c\}$ with $S_{P}(x,c) > S_{P}(y,c)$ we require $x \succ y$ in the manipulator's vote $v$.
\end{description}

It was shown in~\cite{Schulze2011} that the resulting set of preference relations does not contain cycles and thus can be extended to a linear order which makes $c$ the unique winner. However, it was also shown in~\cite{Schulze2011} that the same approach cannot resolve ties between candidates that do not belong to the winning set. It is a natural question if a candidate $c$ that is not in the winning set can be made a winner by adding a single vote. Clearly a necessary condition is $S_P(c,x) \geqslant S_P(x,c)-2$ for all $x \in \mathcal C \setminus\{c\}$. So we can formulate the following problem.
\begin{description}
\item[Single vote UCM.] Given a profile $P$ and a candidate $c$ with $S_P(x,c) \leqslant S_P(c,x)+2$ for all $x \in \mathcal C \setminus\{c\}$, does there exist a single vote $v$ such that
$c\in \mathcal W_{P\cup \{v\}}$?
\end{description}
Here, we show that the straightforward adaption of the above rules does not solve this problem, even if there is a single vote manipulation that makes $c$ a winner.
 \begin{figure}[tb]
   \centering
   \includegraphics[width=0.8\linewidth]{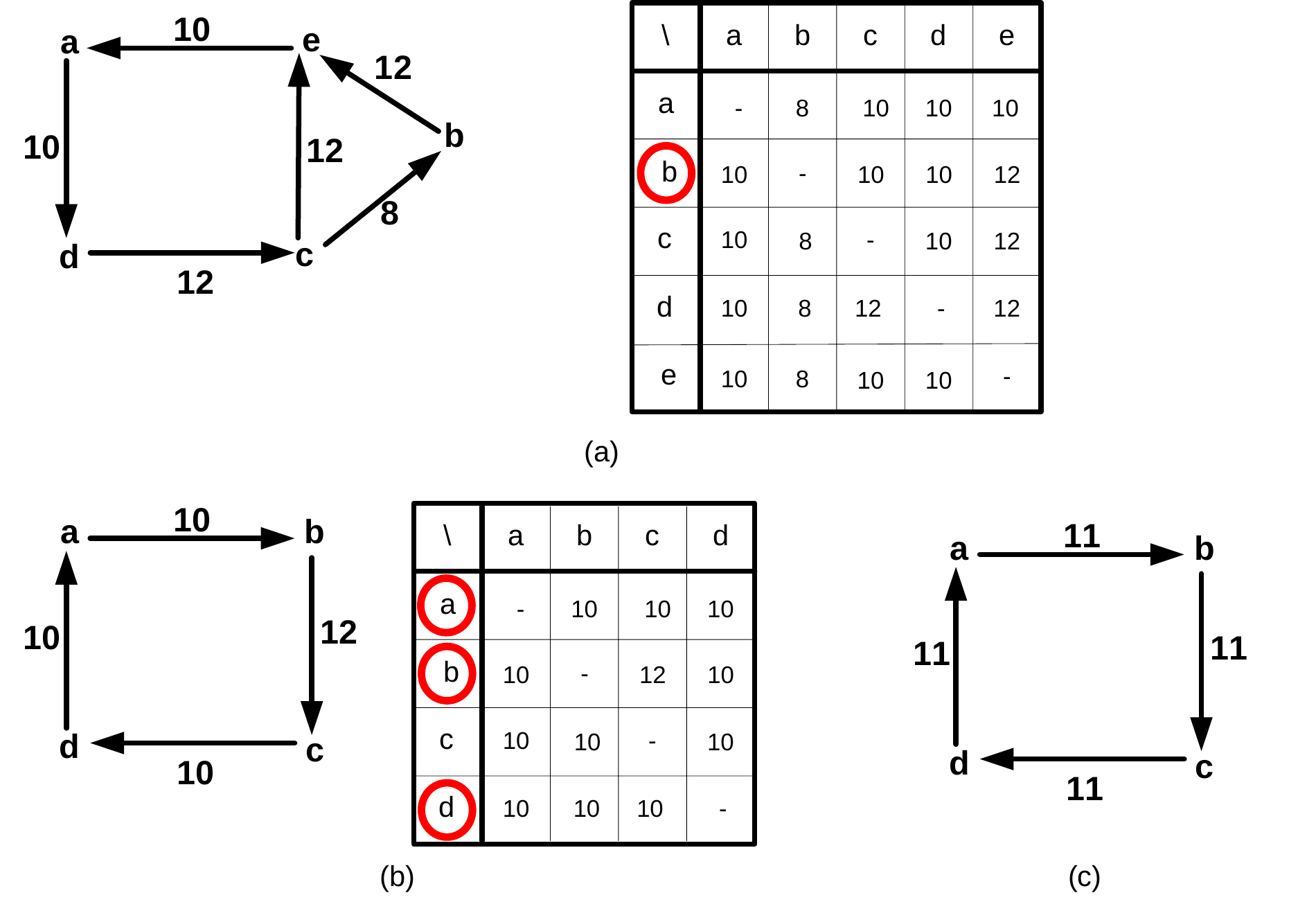}
   \caption{
      (a) The WMG $\mathbf{G_P}$ and the table of $\mathbf{S_{P}(x,y)}$, $\mathbf{x,y \in \{a,b,c,d,e\}}$ from Example~\ref{exm:ucm_no_winner};
   (b)/(c) The WMG $\mathbf{G_P / G_{P\cup\{v\}}}$ and the table of $\mathbf{S_{P}(x,y)}$/$\mathbf{S_{P \cup\{v\}}(x,y)}$, $\mathbf{x,y \in \{a,b,c,d\}}$ from Example~\ref{exm:ucm_cycles}.}
   \label{fig:example3}
 \end{figure}
A major difference between the unique winner and the co-winner UCM problems is that the manipulation always exists in the former problem and it might not exist in the latter as the following example demonstrates.

\begin{example}\label{exm:ucm_no_winner}
Consider an election with five alternatives $\{a,b,c,d,e\}$. Figure~\ref{fig:example3}(a) shows the WMG and the corresponding table of maximum strengths. The unique winner is $b$.  However, the difference $S_P(x,c) - S_P(c,x) \leqslant 2$, $x \in \{a,b,d,e\}$. Hence, $c$ satisfies the trivial necessary condition for being made a winner by adding a single vote.

To see that there is no successful manipulation we  notice that $S_P(c,d)=S_P(d,c)-2$. Hence the manipulation must increase the weight of at least one critical $c$-$d$-path. As there is only one critical $c$-$d$-path this forces $e\succ a\succ d$ in the manipulator's vote. But on the other hand $S_P(c,b)=S_P(b,c)-2$ requires that the weight of every critical $b$-$c$-path decreases which implies that $a\succ e$ or $d\succ a$, which gives a contradiction.

Consider the preference relations that are output by the rules. Following the first rule we add $e \succ a \succ d$ and $c \succ b$. Following the second rule, we add $d \succ \{a,b,e\}$. This creates a cycle and thus cannot be completed to a linear order.\qed
\end{example}

Next, we show that the rules do not find the manipulator vote even if such a manipulation exists for the co-winner UCM problem
using Examples~\ref{exm:ucm_cycles}--\ref{exm:ucm_cycles2}.

\begin{example}\label{exm:ucm_cycles}
Consider an election with four alternatives $\{a,b,c,d\}$. Figure~\ref{fig:example3}(b) shows its WMG and the corresponding table of maximum strengths. The set of winners is $\{a,b,d\}$ and $S_P(x,c) - S_P(c,x) \leqslant 2$, $x \in \mathcal W_P$. Following the first rule we add $c \succ d \succ a \succ b$. However, by the second rule, we add $b \succ a$ which creates a cycle. Note that a successful manipulation $v$ exists $v = (c \succ d \succ a \succ b)$ (Figure~\ref{fig:example3}(c)).\qed
\end{example}


\begin{example}\label{exm:ucm_cycles2}
Consider the election with 11 alternatives from Example~\ref{exm:ucm_choices}.
Following the first rule we add $c \succ e_1 \succ d_1 \succ b_1$ to the manipulator vote
as $\pi = (c, e_1, d_1, b_1)$ is a strongest path from $c$ to $b_1$.
As we showed in Example~\ref{exm:ucm_choices}, there does not exist an extension of this partial
order to a total order that makes $c$ a co-winner.  However, a successful manipulation $v$ exists (Figure~\ref{fig:example5ba}(b)).\qed
\end{example}

Therefore, our study highlights a difference between unique winner and co-winner UCM under Schulze's rule with
a single manipulator and demonstrates that co-winner UCM with a single manipulator was not resolved.
Moreover, we believe that Schulze's rule is an interesting example, where the tie-breaking in favor of a manipulator, which
corresponds to co-winner UCM, makes the problem non-trivial compared to  tie-breaking against manipulators,
which corresponds to unique winner UCM. Two rules with similar properties have been considered in the literature.
Conitzer, Sandholm and Lang~\cite{conitzer2007elections} showed that Copeland's rule is polynomial
with 3 candidates in unique winner WCM, while it is NP-hard with 3 candidates in co-winner WCM~\cite{copeland}.
The most recent result is due to Hemaspaandra, Hemaspaandra and Rothe~\cite{onlineman} who showed that
the online manipulation WCM is polynomial for plurality in the co-winner model,
while it is  coNP-hard in the unique winner model.

Our algorithm from Section~\ref{sec:UCM} can still be used as a subroutine to solve the unique winner UCM problem.
\begin{corollary}
 The unique winner UCM problem can be solved in polynomial time.
\end{corollary}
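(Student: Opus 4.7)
The plan is to reduce the unique winner UCM problem with $|M|$ manipulators to the co-winner UCM problem with $|M|-1$ manipulators, plus a single invocation of the constructive resolvability procedure. For the degenerate case $|M|=0$ we simply check whether $c$ is already the unique winner of $P^{NM}$. For $|M|\geqslant 1$, I would fix any manipulator $m\in M$ and call Algorithms~\ref{alg:bounds} and~\ref{alg:spanning_tree_ucm} on the co-winner UCM instance $(P^{NM}, c, M\setminus\{m\})$. If that call reports FAIL, return FAIL. Otherwise, by Theorem~\ref{thm:ucm_manip} we obtain a profile $P^{M'}$ of $|M|-1$ identical votes such that $c$ is a co-winner of $Q := P^{NM}\cup P^{M'}$. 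We then invoke the constructive resolvability procedure of~\cite{Schulze2011} on the pair $(Q,c)$ to obtain a single vote $v$ that makes $c$ the unique winner of $Q\cup\{v\}$, and output $P^M := P^{M'}\cup\{v\}$. Every step runs in polynomial time.

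Soundness is immediate: Theorem~\ref{thm:ucm_manip} guarantees that $c\in\mathcal W_{Q}$, and the resolvability property guarantees that the added vote $v$ promotes $c$ to a unique winner of $Q\cup\{v\}=P^{NM}\cup P^M$. The substantive part is completeness, which rests on the following equivalence claim: if $(P^{NM}, c, M)$ is a Yes-instance of unique winner UCM with $|M|\geqslant 1$, then $(P^{NM}, c, M\setminus\{m\})$ is a Yes-instance of co-winner UCM. To prove this, I would fix any witnessing profile $P^M$ for unique winner UCM and let $\tilde P$ be obtained by deleting a single vote $v^{*}$ from $P^M$. For every $x\neq c$ the uniqueness condition gives $S_{P^{NM}\cup P^M}(c,x)>S_{P^{NM}\cup P^M}(x,c)$; since both sides have the parity of $|P^{NM}\cup P^M|$, this strict inequality forces a gap of at least $2$. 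Deleting $v^{*}$ shifts every arc weight $w(\cdot,\cdot)$ by at most $1$ in absolute value, and a short argument (applying the minimum over an old optimal path, and conversely) shows that a uniform $\pm 1$ perturbation of arc weights shifts each path-strength $S(\cdot,\cdot)$ by at most $1$. Hence the difference $S(c,x)-S(x,c)$ changes by at most $2$, so it remains non-negative, meaning $c$ stays a co-winner of $P^{NM}\cup\tilde P$.

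The main obstacle is the perturbation bound on $S$ combined with the parity observation; these two ingredients are what allow a strict inequality at $P^{NM}\cup P^M$ to survive as a non-strict inequality after removing a vote. Once this perturbation lemma is recorded, the reduction is mechanical, and the polynomial running time follows from the polynomial bounds on Algorithms~\ref{alg:bounds} and~\ref{alg:spanning_tree_ucm} in the unweighted setting together with the polynomial-time construction of $v$ from the resolvability proof.
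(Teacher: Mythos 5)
Your proposal is correct and matches the paper's proof essentially step for step: both reduce unique-winner UCM with $|M|$ manipulators to co-winner UCM with $|M|-1$ manipulators, use the constructive resolvability property for the forward direction, and for the converse use the parity-induced gap of $2$ together with the observation that deleting one vote perturbs each $S(\cdot,\cdot)$ by at most $1$. The only difference is that you spell out the parity and perturbation arguments in slightly more detail than the paper does.
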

\begin{proof}
 Run the algorithm from Section~\ref{sec:UCM} with $|M|-1$ manipulators and return the answer.
 To show the correctness of this procedure, we need to show that $c$ is a co-winner with $|M|-1$ manipulators iff $c$ is a unique winner with $|M|$ manipulators.

 $(\Rightarrow)$: Suppose $c$ can be made a co-winner with $|M|-1$ manipulators. Use the Resolvability property to add one more vote to make $c$ a unique winner.

 $(\Leftarrow)$: Suppose $c$ can be made a unique winner with $|M|$ manipulators. Therefore, $S_P(c,x) \ge S_P(x,c)+2$ for every candidate $x\in \mathcal{C}\setminus \{c\}$ in the profile $P=P^{NM} \cup P^{M}$. Now, remove an arbitrary vote of a manipulator and obtain the profile $P'$. We have that $S_{P'}(c,x) \ge S_P(c,x)-1$ and $S_{P'}(x,c) \le S_P(x,c)+1$ for every candidate $x\in \mathcal{C}\setminus \{c\}$. Therefore, $S_{P'}(c,x) \ge S_P(c,x)-1 \ge S_P(x,c)+1 \ge S_{P'}(x,c)$ for every candidate $x\in \mathcal{C}\setminus \{c\}$, showing that $c$ is a co-winner with $|M|-1$ manipulators.
\end{proof}

\section{Conclusions}
We have
investigated the computational complexity of the coalitional weighted and unweighted manipulation problems
under Schulze's rule.
We  proved that it is polynomial to
manipulate Schulze's rule with any number of  manipulators
in the weighted co-winner model and in the unweighted case in both unique and co-winner models.
This resolves an open question regarding
the computational complexity of unweighted coalition
manipulation for Schulze' rule~\cite{rankedpairs2}.
This vulnerability to manipulation
may be of concern to the many supporters of
Schulze's rule.
%
%

\section{Acknowledgments}
NICTA is funded by
the Australian Government as represented by
the Department of Broadband, Communications and the Digital Economy and
the Australian Research Council. This research is also
funded by AOARD grant 124056. Serge Gaspers acknowledges support from
 the Australian Research Council (grant DE120101761).


\end{document}